\documentclass{article}

\usepackage[preprint]{neurips_2023}
\makeatletter
\renewcommand{\@noticestring}{Preprint.}
\makeatother

\AtBeginDocument{%
  \newgeometry{textwidth=6.5in,textheight=9in,top=1in,headheight=12pt,headsep=25pt,footskip=30pt}%
}

\usepackage[utf8]{inputenc}
\usepackage[T1]{fontenc}
\usepackage{amsmath,amssymb,amsthm}
\usepackage{mathtools}
\usepackage{booktabs}
\usepackage{multirow}
\usepackage{float}
\usepackage{graphicx}
\usepackage{xcolor}
\usepackage{enumitem}
\usepackage{algorithm}
\usepackage{algorithmic}
\usepackage{tikz}
\usetikzlibrary{positioning,arrows.meta,calc,fit,backgrounds,shapes.geometric}
\usepackage{hyperref}
\usepackage{url}

\newcommand{\E}{\mathbb{E}}
\newcommand{\Var}{\operatorname{Var}}
\newcommand{\Cov}{\operatorname{Cov}}
\newcommand{\R}{\mathbb{R}}
\newcommand{\indep}{\perp\!\!\!\perp}
\newcommand{\Avar}{\mathrm{Avar}}
\newcommand{\NEH}{NEH}
\DeclareMathOperator{\Corr}{Corr}
\DeclareMathOperator*{\Med}{Med}

\newtheorem{assumption}{Assumption}
\newtheorem{remark}{Remark}
\newtheorem{proposition}{Proposition}

\title{Representation Learning for Semiparametric Causal Mediation Analysis\\ under No Essential Heterogeneity}

\author{%
  Roberto Faleh\thanks{Corresponding author: \texttt{roberto-faleh[at]uni-tuebingen.de}.} \\
  Methods Center\\
  University of T\"ubingen, Germany\\
  \And
  Sofia Morelli \\
  Methods Center\\
  University of T\"ubingen, Germany\\
  \And
  Holger Brandt \\
  Methods Center\\
  University of T\"ubingen, Germany\\
}

\begin{document}
\maketitle
\begin{abstract}
We propose a two-stage estimator for structural mediation parameters that combines deep representation learning with G-estimation under the ``No Essential Heterogeneity'' (NEH) assumption. We term the method \textsc{UNIT}, for \emph{Unmeasured-confounding-robust NEH-based Identification with TARNet}. In the first stage, TARNet estimates the heterogeneous effect of a randomized treatment on a mediator by learning a shared covariate representation across treatment arms. The resulting conditional average treatment effect (CATE) estimate provides a plug-in approximation to the heterogeneity-dependent component of the weight function entering the G-estimating equation of \citet{ZhengZhou2015}, which identifies the structural parameters even in the presence of unmeasured mediator--outcome confounding. We show that more accurate first-stage representation learning can yield a more informative plug-in weight and thereby improve the precision of the structural parameter estimator. In simulations with non-Gaussian covariates and nonlinear mediator effects, TARNet weights reduce the Stage-2 standard error of the mediation coefficient (median SE ratios $1.45$--$1.51$ at $n\ge2000$) compared to the classical approach, at no cost to bias or coverage.
\end{abstract}

\noindent\textbf{Keywords:} causal mediation, G-estimation, no essential heterogeneity, representation learning, structural mean models, TARNet.

\section{Introduction}
\label{sec:introduction}

The primary goal of many randomized experiments is to estimate the effect of a treatment on an outcome \citep{HernanRobins2020}.  In psychology, economics, and biology, researchers are often interested not only in estimating overall treatment effects, but also in studying the underlying causal mechanisms and understanding how an intervention exerts its effects \citep{VanderWeele2015}. Such an investigation is especially relevant when the objective is, for instance, to evaluate a scientific theory or to guide and inform the design of future interventions. A frequently studied setting \citep{Rijnhart2021,holland1988} involves a mediator~$M$, an intermediate variable that is influenced by the treatment and, in turn, affects the outcome, potentially together with a set of baseline covariates.
Randomization of the treatment $T$ removes confounding between the treatment and the outcome. However, for mediator variables, the situation becomes more complex. Randomization of the treatment does not, in general, imply randomization of the mediator, since the mediator may be influenced by processes that occur after the initial treatment assignment \citep{bullock2010}. The consequence is that the causal effect of the mediator on the outcome can be confounded. In this context, several technical approaches are available and, in general, this does not represent a major methodological concern \citep{Wodtke_Zhou_2026}. A more serious problem arises when some of these confounding variables are not measured \citep{TenHave2010, ImaiKeeleYamamoto2010}.
This is a plausible scenario when, for example, the trial was originally designed to estimate only the overall treatment effect and was not designed to support a causal mediation analysis. In other circumstances, the confounders can be potentially known but unethical, prohibitively expensive, or difficult to measure \citep{Brandt2020}. As a result, researchers could have extensive knowledge about the causal mechanism and a rich set of covariates but still lack confounders needed for the identification of the indirect effects.
For example, an online retail platform may be interested in examining the impact of a banner on customers’ spending, with this relationship potentially mediated by user engagement. The platform could collect customer-level data, including individual preferences and demographic information, yet still lacks covariates necessary for decomposition of the total causal effect. These unobserved variables may include sensitive attributes or information that is  difficult to collect, such as the presence of other individuals in the room while the customer is interacting with the platform.

In such cases, the classical assumption of Sequential Ignorability (SI)\citep{ImaiKeeleYamamoto2010}, which requires the absence of unmeasured mediator--outcome confounding conditional on observed covariates, is not satisfied. Methods that rely on the sequential ignorability assumption, then, would deliver inconsistent estimates \citep{ImaiKeeleYamamoto2010, TenHave2007}.

An alternative framework that avoids relying on the SI assumption is offered by Structural Nested Models (SNMs) together with the corresponding G-estimation approach introduced by \citet{Robins1994}. This approach offers a framework for estimating causal effects in settings where confounders may themselves be influenced by the treatment, as in the case of confounders measured post-treatment. SNMs have considerable advantages over other methods for handling time-varying and unmeasured confounding \citep{loh2025estimating}: they can accommodate flexible structural specification, have doubly robust properties, and can exploit the instrumental variable approach for identification \citep{VansteelandtJoffe2014}. Despite these strengths, the application of G-estimation in empirical research has been relatively infrequent, historically generally due to the complexity of implementation and the limited availability of "off-the-shelf" software \citep{VansteelandtJoffe2014}.
In the causal mediation context, \citet{TenHave2007} applied G-estimation under a rank-preserving assumption. This assumption requires that the idiosyncratic error does not depend on the counterfactual regime at the individual level, meaning that each person's deviation from the structural model is identical regardless of what treatment and mediator values are hypothetically assigned. 

While this allows the existence of unmeasured confounders in the mediator--outcome mechanism, it rules out individual-level heterogeneity in treatment or mediator effects through unobserved confounders.  In many settings this assumption could be difficult to satisfy and often it is an excessively strong constraint:it implies, for instance, that a person whose outcome deviates unusually from the structural prediction under one treatment--mediator regime must deviate by exactly the same amount under every other regime. Indeed, not even non-confounding, idiosyncratic effect heterogeneity is admitted under rank preservation. In many behavioural and biomedical settings, such rank-preservation is implausible and has been criticised accordingly \citep{HernanRobins2020}.

To overcome this limitation, \citet{ZhengZhou2015} propose a more general structural mean model in which the \citet{TenHave2007} formulation appears as a special case.  They alleviate rank preservation by replacing it with a weaker mean-level condition closely related to the econometric ``No Essential Heterogeneity'' (NEH) assumption, building on the marginal treatment effect framework of \citet{HeckmanVytlacil2005} and the definition of essential heterogeneity introduced in \citet{HeckmanUrzuaVytlacil2006}.  This permits arbitrarily rich effect modification through observed covariates, and even idiosyncratic, non-confounding individual heterogeneity, while still accommodating unmeasured confounders that shift both mediator and outcome.

Additionally, the semiparametric structure of the \citet{ZhengZhou2015} model makes it especially suitable for applied settings in which researchers are not merely interested in the magnitude of a treatment effect but also seek to parametrically characterize the causal mechanism through which it operates. 
A practical challenge in implementing the \citet{ZhengZhou2015} framework is the estimation of the optimal weight function that enters the G-estimating equation. \cite{ZhengZhou2015} employed parametric formulations, \cite{Brandt2020} adopted a Thin Plate Spline \citep{Wood2003ThinPlate} to introduce flexibility, and \cite{Morelli2025} broadened the original framework to incorporate latent variables. 

However, when covariate-mediator relationships are nonlinear or the covariate space is high-dimensional, the available tools for weight estimation are limited, and poor weight estimation translates directly into efficiency loss \cite{ZhengZhou2015}. We will show that that the weights depend on the conditional average treatment effect (CATE) of the treatment on the mediator, which must be estimated from the observed data. We will show, both theoretically and with a simulation study, that when the covariate-mediator relationship is nonlinear and high-dimensional, linear approaches may incur in an estimation error for the mediator CATE \citep{curth2021nonparametric}, which in turn degrades the quality of the plug-in weight necessary for the G-estimation step, reducing the precision of the downstream structural parameter estimates.

We address this challenge by proposing a representation-learning  approach \citep{bengiorapler} for the estimation of the weights, thereby broadening the empirical applicability of a framework that already enjoys strong semiparametric identification guarantees. We use a Treatment-Agnostic Representation Network \citep[TARNet;][]{shalit2017}, a neural network architecture with shared representation layers and treatment-specific output heads, to estimate the CATE on the mediator.  The shared representation~$\Phi(X)$ enforces a common feature space across treatment arms. Because the CATE is computed as a difference of two functions of the same representation, the smoothness and dimensionality of~$\Phi$ provide an inductive bias that may help regularise the estimated treatment effect surface, particularly when $\mu_0$ and $\mu_1$, the arm-specific conditional expectations of the mediator, share common features \citep{curth2021nonparametric}.  The TARNet's CATE output is then embedded as the weight function in the G-estimating equation based on the model originally developed by \citet{ZhengZhou2015}. A detailed description is provided in \ref{subsec:from-t-to-tarnet}

\paragraph{Contributions.}
Our contributions are threefold.  First, we illustrate a clear link between the quality of the learned representation~$\Phi$ and the efficiency of the structural parameter estimator: better CATE estimation results in a more informative weight matrix, which reduces the asymptotic variance of~$\hat\theta$.  Second, we propose UNIT (Unmeasured-confounding-robust NEH-based Identification with TARNet), a two-stage algorithm combining cross-fitted TARNet estimation and G-estimation. This approach requires the No Essential Heterogeneity assumption rather than the classical, and stronger, Sequential Ignorability as the key identifying condition.  Third, the practical advantages of this approach are demonstrated through a simulation study with non-Gaussian covariates, complex nonlinear treatment effects, and unmeasured confounding. We show that TARNet-based weights can improve efficiency.

\paragraph{Outline.}
Section~\ref{sec:framework} introduces the causal framework, the model and the NEH assumption.  Section~\ref{sec:methods} develops the representation-learning approach to weight estimation and the two-stage algorithm.  Section~\ref{sec:simulation} presents a simulation study that is used to investigate the properties of the proposed two-stage algorithm under empirically relevant conditions.  Section~\ref{sec:part2-results} presents the results.  Section~\ref{sec:discussion} discusses the findings and limitations.

\newpage
\section{Causal Framework}
\label{sec:framework}

\subsection{Model and potential outcomes}

Following the potential outcomes framework introduced by \citep{Rubin1974} and the model specifications introduced \citet{ZhengZhou2015}, let $Y(t,m)$ denote the potential outcome under treatment level~$t$ and mediator level~$m$, and let $M(t)$ denote the potential mediator under treatment~$t$ \citep{RobinsGreenland1992}.  We observe $n$ independent units $(X_i, T_i, M_i, Y_i)_{i=1}^n$, where $X\in\R^p$ is a vector of pre-treatment covariates, $T\in\{0,1\}$ is the binary treatment, $M\in\R$ is the mediator, and $Y\in\R$ is the outcome.

For observed covariates~$X$ and unobserved confounders~$U$, write
\begin{equation}\label{eq:potential-outcome}
Y(t,m) \;=\; \mu(t,m,X) \;+\; \varepsilon(t,m;\, U,X),
\end{equation}
where $\mu(\cdot)$ captures the systematic (identified) component and $\varepsilon(t,m;\, U,X)$ is an idiosyncratic error satisfying $\E[\varepsilon(t,m;\, U,X)\mid X]=0$ for all $(t,m)$.  The semicolon separates the counterfactual regime $(t,m)$ from the random components $(U,X)$. To streamline the notation, the $(U,X)$ component will be omitted whenever it is not required for the derivations.

We specify a structural mean model of the form
\begin{equation}\label{eq:smm}
\mu(t,m,X) \;=\; g(X) \;+\; \sum_{k=1}^K \theta_k\, h_k(t,m,X),
\end{equation}
where $g(X)$ is a baseline outcome model and the $h_k(\cdot),\; k = 1,2,\ldots,K,$ are known functions, which satisfy $h_k(0,0,X)=0$ and describe the causal effects of the treatment and the mediator.

By consistency assumption (Assumption~\ref{ass:sutva}) the observed outcome satisfies
\begin{equation}\label{eq:observed-outcome}
Y_i \;=\; g(X_i) \;+\; \sum_{k=1}^K \theta_k\, h_k(T_i,M_i,X_i) \;+\; \varepsilon_i,
\end{equation}
where $\varepsilon_i := \varepsilon(T_i, M_i;\, U_i, X_i)$ is the structural error evaluated at the realised treatment and mediator.  Defining the transformed outcome
\[
\tilde Y_i(\theta) \;=\; Y_i \;-\; \sum_{k=1}^K \theta_k\, h_k(T_i,M_i,X_i),
\]
the true parameter~$\theta_0$ satisfies $\E[\tilde Y(\theta_0)\mid T,X] =E[\tilde{Y}(\theta_0)\mid X] =g(X) $ (under assumption \ref{ass:neh}).
For suitable weight functions $A(T,X)$ satisfying $\E[A(T,X)\mid X]=0$, the estimating equation
\begin{equation}\label{eq:gest}
\sum_{i=1}^n A(T_i,X_i)
\left\{Y_i \;-\; g(X_i) \;-\; \sum_{k=1}^K \theta_k\, h_k(T_i,M_i,X_i)\right\} \;=\; 0
\end{equation}
yields a consistent estimator~$\hat\theta$ \citep{ZhengZhou2015}.

Throughout the remainder of this paper, we refer to the model with $K=2$, $h_1(T,M,X) = T$, and $h_2(T,M,X) = M$, as an illustrative example, so that
\begin{equation}\label{eq:rpm-spec}
\mu(t,m,X) - \mu(0,0,X) \;=\; \theta_1\, t \;+\; \theta_2\, m.
\end{equation}
This implies no treatment-by-mediator interaction in the outcome and constant controlled effects $\theta_1$, $\theta_2$. This keeps the structural parameters interpretable and preserves comparability with the results reported in \citet{TenHave2007}, \citet{ZhengZhou2015}, \citet{Brandt2020}, and \citet{Morelli2025}. Richer specifications in the structural model can be easily accommodated \citep{ZhengZhou2015} and additional details will be developed in Section ~\ref{subsec:mediator-cate}.

\subsection{Overview on 2-stage approach}
\label{subsec:setting}

Parsimony in the structural outcome model does not require parsimony in the rest of the data-generating process. Our methodological focus is on settings where the covariate vector~$X\in\R^p$ is high-dimensional and the
mediator CATE
\begin{equation}\label{eq:mediator-cate}
\tau_M(X) \;=\; \E[M\mid T{=}1, X] \;-\; \E[M\mid T{=}0, X]
\end{equation}
is a non-trivial function to learn, involving nonlinearities or
interactions that cannot be captured by simple linear models.
The setting we are describing is the combination of a complex and heterogeneous treatment effect on the mediator, paired with a relatively simple, structural outcome model, and could generally arise in many applied settings. For example, baseline individual characteristics (e.g., genetic profile, comorbidities) may interact in a complex manner in determining the level of a biomarker (e.g., blood pressure), whereas the effect of the biomarker on the outcome (e.g., cardiovascular risk) per unit change in the biomarker may be relatively simple, invariant among the population, and approximately constant.

In the specific formulation made, the weights are 
\begin{equation}\label{eq:optimal-weight-preview}
A^*(T,X) \;=\; \bigl(T-\E[T]\bigr)\,W^*(X), \qquad
W^*(X) \;=\; \begin{pmatrix} 1 \\[2pt] \tau_M(X) \end{pmatrix}.
\end{equation} details about the weights and the CATE will be discussed in Section~\ref{subsec:mediator-cate}.

We refer to this step, the estimation of the weights necessary for the estimating equations, as Stage 1 . Solving the estimating equations is Stage 2.  We summarize the algorithm below; the full development is deferred to Section~\ref{sec:methods}.

\subsection{General Algorithm Overview}
\label{subsec:algoverview}

Collect the basis functions of Eq.~\eqref{eq:smm} into the vector
\[
H_i \;:=\; \bigl(h_1(T_i,M_i,X_i),\,\dots,\,h_K(T_i,M_i,X_i)\bigr)^\top \in \R^K,
\]
so that the structural component of the model is
$\sum_{k=1}^K \theta_k\,h_k(T_i,M_i,X_i) = H_i^\top\theta$, with
$\theta=(\theta_1,\dots,\theta_K)^\top$. In the structural mean (mediation) model the
basis becomes $H_i=(T_i,M_i)^\top$ and $\theta=(\theta_1,\theta_2)^\top$ are the
controlled direct and mediator effects.

\paragraph{Stage 1: weight estimation.}
We estimate the optimal weight $W^*(X)=(1,\tau_M(X))^\top$
by replacing the unknown mediator CATE with a cross-fitted estimate
$\hat\tau_M^{(-)}$ of~\eqref{eq:mediator-cate}. Any CATE learner may be used; we use a
cross-fitted TARNet (Section~\ref{subsec:tarnet}). This yields the plug-in centred
instrument
\[
\hat A_i \;=\; (T_i-p)\,\hat W(X_i), \qquad
\hat W(X_i)=\bigl(1,\;\hat\tau_M^{(-)}(X_i)\bigr)^\top,
\]
which satisfies the centering condition $\E[\hat A_i\mid X_i]=0$ under randomisation. The
weight $\hat A_i$ is held fixed for Stage~2.

\paragraph{Stage 2: structural estimation.}
Given the Stage~1 weight $\hat A_i$ and a cross-fitted estimate $\hat g^{(-)}$ of the
baseline, the structural estimator $\hat\theta$ is defined as the solution of the
empirical G-estimating equation in Eq.~\eqref{eq:gest}:
\begin{equation}\label{eq:gest-emp}
\sum_{i=1}^n \hat A_i\,\bigl\{\,Y_i - \hat g^{(-)}(X_i) - H_i^\top\theta\,\bigr\} \;=\; 0 .
\end{equation}

It is important to highlight two aspects about Stage~2;

First, because $\dim\hat A_i=\dim H_i=K$, Eq.~\eqref{eq:gest-emp} is a
just-identified linear system in~$\theta$, with closed-form solution
\begin{equation}\label{eq:theta-hat}
\hat\theta \;=\; \Bigl(\textstyle\sum_{i=1}^n \hat A_i H_i^\top\Bigr)^{-1}
\sum_{i=1}^n \hat A_i\,\bigl\{Y_i - \hat g^{(-)}(X_i)\bigr\}.
\end{equation}
This is an instrumental-variable estimator: the centred weight $\hat A_i$ acts as an
instrument for the structural regressors $H_i$, and invertibility of
$\sum_i \hat A_i H_i^\top$ is exactly the relevance (non-degeneracy) condition of
Assumption~\ref{ass:nondegen}.

Second, because the instrument is centred, the equation is orthogonal to the baseline:
$\E[\hat A_i\,f(X_i)]=0$ for any function $f$ of the covariates. Consistency of
$\hat\theta$ therefore does not require $\hat g^{(-)}$ to be correctly specified; the
baseline enters only as a variance-reducing nuisance (cf.\ the working-model property
of Eq.~\eqref{eq:gest}).

The baseline is itself fit by cross-fitting on the transformed outcome
$\tilde Y_i(\theta)=Y_i-H_i^\top\theta$, and so depends on the current value
of~$\theta$. In the general case $\hat g^{(-)}$ and $\hat\theta$ are thus determined
jointly, by the estimation scheme of \citet{ZhengZhou2015}. Under the linear working
model for the baseline adopted here the joint system is linear and Eq.~\eqref{eq:theta-hat}
is obtained in a single step, consistent with the one-step recommendation of
\citet{ZhengZhou2015}; the concrete cross-fitted solver and this reduction are given in
Algorithm~\ref{alg:unit}.

\subsection{ Causal Assumptions}

The model rests on five assumptions, introduced below.  The first two: consistency/SUTVA
(Assumption~\ref{ass:sutva}) and treatment ignorability with
positivity (Assumption~\ref{ass:ignorability}) are standard in the causal inference literature and
hold by design in a randomised trial.  The third, no essential
heterogeneity (Assumption~\ref{ass:neh}), is the key identifying
condition that distinguishes our approach from methods based on
sequential ignorability: it permits unmeasured mediator--outcome
confounding at the cost of restricting how unobserved confounders may
modify treatment and mediator effects in the mean.  
The final two assumptions, which are of a statistical nature, are necessary for estimation. In particular, the fourth assumption, concerning the specification of the structural model (Assumption~\ref{ass:model-spec}), is necessary to guarantee consistency. The 
fifth assumption, the 
non-degeneracy of the weight matrix
(Assumption~\ref{ass:nondegen}), ensures that the structural
parameters are identifiable from the estimating equation.

\begin{assumption}[Consistency / SUTVA]\label{ass:sutva}
\mbox{}
\begin{enumerate}
    \item[(i)] (No interference) The potential outcomes and potential mediators of unit~$i$ do not depend on the treatment or mediator values assigned to other units.
    \item[(ii)] (No hidden treatment versions) There is a single version of each treatment level, so that the mapping from treatment assignment to potential outcome is well-defined.
    \item[(iii)] (Consistency) Observed outcomes equal the relevant potential outcomes under realised treatment and mediator values: $M_i = M_i(T_i)$ and $Y_i = Y_i(T_i, M_i)$.

\end{enumerate}
\end{assumption}
The three conditions constituting Assumption~\ref{ass:sutva} arise directly from the experimental design \citep{Morgan_Winship_2014}. Their validity must be evaluated prior to the initiation of the experiment, as any violation would render both estimation and identification infeasible.

\begin{assumption}[Treatment ignorability and positivity]\label{ass:ignorability}
\mbox{}
\begin{enumerate}
    \item[(i)] (Ignorability) Treatment assignment is independent of all potential outcomes and potential mediators given pre-treatment covariates:
    \[
    \big(M(0),\,M(1),\,Y(t,m):t,m\big)\;\indep\;T\;\mid\;X.
    \]
    \item[(ii)] (Positivity) Every unit has a positive probability of receiving each treatment level: $0 < \Pr(T=t\mid X) < 1$ for each $t\in\{0,1\}$.
\end{enumerate}
\end{assumption}

 In observational settings, Assumption~\ref{ass:ignorability} would require that all confounders of the $T-M$ and $T-Y$ relationships are measured and included in $X$. The extension of UNIT to this case is discussed in Section ~\ref{sec:limitations} .
It should be noted that Assumption~\ref{ass:ignorability} concerns only the treatment assignment mechanism. It does not require the absence of unmeasured confounding between~$M$ and~$Y$.

\begin{assumption}[No Essential Heterogeneity]\label{ass:neh}
For all treatment and mediator levels $t,m$,
\begin{equation}\label{eq:neh-error}
\E\!\left[\varepsilon(t,m;\, U,X)\,\middle|\,M,X,T\right] \;=\; F(M,X,T),
\end{equation}
where $F$ is the \emph{same function} of $(M,X,T)$ across all counterfactual regimes $(t,m)$.
\end{assumption}

Equivalently, for any $(t_1,m_1)$ and $(t_0,m_0)$,
\begin{equation}\label{eq:neh-diff}
\E\!\left[Y(t_1,m_1)-Y(t_0,m_0)\,\middle|\,M,X,T\right]
\;=\;\mu(t_1,m_1,X)-\mu(t_0,m_0,X),
\end{equation}
so that differences in conditional means across regimes are driven entirely by the systematic component~$\mu$.  Assumption~\ref{ass:neh} allows for rich heterogeneity through~$X$ (including $X$-specific effect modification via~$\mu$) and arbitrary residual noise. However, it excludes unmeasured effect modification in the mean: once we condition on $M, T, X$, unobserved confounders do not influence the average gain from switching $t, m$. From an econometric perspective, this is the equivalent of the negation of ``Essential Heterogeneity'', as formally defined by \citet{HeckmanUrzuaVytlacil2006}.

We emphasize that, although the NEH assumption is weaker than SI  \citep{Small2012,ZhengZhou2015}, it is suboptimal in settings where SI actually holds. This loss of optimality manifests as an increased variance in the resulting parameter estimates relative to estimators that target the SI semiparametric efficiency bound (e.g., \citealp{TchetgenShpitser2012}). Moreover, if SI holds, nonparametric identification is possible, adding flexibility to the estimation procedure \citep{ImaiKeeleYamamoto2010,Pearl2001}.

\begin{assumption}[Structural model specification]\label{ass:model-spec}
The basis functions $\{h_k(t,m,X)\}_{k=1}^K$ in the structural mean model in Eq.~\eqref{eq:smm} are assumed to be correctly specified up to unknown parameters, the nuisance function $g(\cdot)$ can be misspecified. 

\end{assumption}
This assumption is only partially testable: since potential outcomes are not jointly observed, and only the corresponding model for the observed outcome can be examined empirically.

Assumption~\ref{ass:model-spec} is necessary for consistency. If structural terms are omitted, the baseline model $g(X)$ can not absorb them, introducing bias in the estimated causal parameters~$\hat\theta$.   On the other hand, if $g$ is incorrectly specified, only efficiency is affected, but not consistency. This is the working-model property of G-estimation (see the centering argument in Appendix~\ref{app:rep-gap}, Eq.~\eqref{eq:centering-key}).

To estimate the model, an additional assumption regarding the regularity properties of the weight matrix needs to be introduced.
\begin{assumption}[Non-degeneracy of the weight matrix]\label{ass:nondegen}
The weight vector $W(X)$ is non-degenerate such that its second moment matrix $\E[W(X)W(X)^\top]$ is positive definite.
\end{assumption}

For example the weight vector introduced for the illustrative model earlier is $W(X)= (1, \tau_M(X)) ^\top$. In particular, we require that $\Var(\tau_M(X)) > 0$, implying that the treatment has a heterogeneous effect on the mediator across the covariate space.  This suggests that if the treatment has no heterogeneous effects on the mediator (i.e., $\tau_M(X)$ is constant), the second row of the weight matrix is collinear with the first, then the estimating equation has no unique solution for~$\theta_2$, and the mediation parameter is not identified. Assumption~\ref{ass:nondegen} is the mediation analogue of the instrumental-variable relevance condition:  this condition can be assessed empirically by testing whether treatment-by-covariate interactions in the mediator model are significant \citep{Morelli2025}.

In practice, $\tau_M(X)$ is unknown and must be estimated from the data.  Even if the model is formally identified with $\Var(\tau_M(X)) > 0$, the actual size of $\Var(\tau_M(X))$ affects the efficiency of the estimator in Stage-2; this is the task we address in Section~\ref{sec:methods}.

\subsection{The mediator CATE and the optimal weight function}
\label{subsec:mediator-cate}

The G-estimating Eq.~\eqref{eq:gest} holds for any weight
function $A(T,X)$ satisfying the centering condition
$\E[A(T,X)\mid X] = 0$.  Different choices of~$A$ yield
consistent estimators with different asymptotic variances.
Theorem~3 of \citet{ZhengZhou2015} characterises the
semiparametrically efficient choice: the optimal weight is
\[
A^*(T,X) \;=\;
\bigl\{\E[\partial\tilde Y/\partial\theta \mid X,T]
     - \E[\partial\tilde Y/\partial\theta \mid X]\bigr\}
\;\Omega^{-1}(X),
\]
where $\Omega(X) = \Var(\varepsilon(T,M;\,U,X)\mid X)$.  UNIT adopts the $\Omega^{-1}\equiv \mathbf{I}_p$ simplification (valid under the linear-$g$ working model), so the deployed weight is $(T-p)\,W^*(X)$; see Appendix~\ref{subsec:omega-role}.
For our illustrative model specification
the partial derivative $\partial\tilde Y/\partial\theta$ equals
$-H_i$ with $H_i = (T_i, M_i)^\top$, and the difference of
conditional expectations of $H_i$ simplifies to
\[
\E[H_i \mid X, T] - \E[H_i \mid X]
\;=\;
(T - p)\begin{pmatrix} 1 \\[2pt] \tau_M(X) \end{pmatrix},
\]
where the first entry is trivial ($\E[T\mid X,T] - \E[T\mid X]
= T - p$) and the second follows from
$\E[M\mid X,T=1] - \E[M\mid X] = (1-p)\,\tau_M(X)$ and
analogously for $T=0$.  The optimal weight therefore takes the
multiplicative form
\begin{equation}\label{eq:optimal-weight}
A^*(T,X) \;=\; (T - p)\,W^*(X)\,\Omega^{-1}(X),
\qquad
W^*(X) \;=\; \begin{pmatrix} 1 \\[2pt] \tau_M(X)
              \end{pmatrix}.
\end{equation}

Under Assumptions~\ref{ass:sutva}
and~\ref{ass:ignorability}, the mediator CATE is identified
from observed data:
\begin{equation}\label{eq:cate-identification}
\tau_M(X) \;=\; \E[M\mid T{=}1,X] \;-\; \E[M\mid T{=}0,X].
\end{equation}

Two features of Eq.~\eqref{eq:optimal-weight} need additional details.  First, the weight vector $W^*(X)$ depends on~$X$
only through the mediator CATE $\tau_M(X)$.  This means that
the entire problem of constructing efficient instruments
for~$\theta$ reduces to the problem of estimating a single
conditional average treatment effect, which is a well-studied task in
the causal inference literature \citep{Kunzel2019}.  Second, the centering factor
$(T-p)$ ensures that $\E[A^*(T,X)\mid X]=0$ automatically
under randomisation, regardless of the quality of
$\hat\tau_M$.  Consequently, any plug-in weight
$\hat A_i = (T_i - p)\hat W(X_i)$ preserves the consistency
of~$\hat\theta$; the quality of $\hat\tau_M$ affects only
efficiency, not identification.

Because $\tau_M(X)$ is a CATE, with the mediator~$M$
playing the role of outcome and the randomised treatment~$T$
as the intervention, the rich literature on CATE estimation
applies directly to Stage~1 of our algorithm.  
Existing theoretical results on learner-specific error bounds and the function of shared representations \citep{curth2021nonparametric}, systematic comparisons of metalearners \citep{Kunzel2019}, the use of cross-fitting for nuisance-function estimation \citep{Chernozhukov2018}, as well as analyses of the dependence of learner performance on the underlying data-generating process \citep{curth2021nonparametric}, are all directly pertinent to our setting. These considerations motivate the adoption of representation-learning architectures \citep{pmlr-v48-johansson16}, such as TARnet \citep{shalit2017}.

For richer structural models, additional structural components can be added.  If the basis includes
a treatment-by-mediator interaction $h_3 = TM$, the corresponding
row of the optimal weight involves the arm-specific conditional
mean $\mu_1^M(X) = \E[M\mid T{=}1, X]$, not just the difference.  More generally, for any basis function that is
linear in~$M$ without $T\times M$ interaction, including
covariate modulated effects such as $h_k = M\cdot f(X)$, the
weight continues to depend on~$X$ through $\tau_M(X)$ alone.
In all cases where arm-specific means are required, TARNet
provides them directly through its two output heads
$h_0(\Phi(X))$ and $h_1(\Phi(X))$, so the extension of UNIT
to richer structural specifications does not require a change
of Stage~1 architecture, only the construction of the weight
vector~$W(X)$ and the basis vector~$H_i$ must be adapted accordingly.

\section{Methods: Representation Learning for Weight Estimation}
\label{sec:methods}

\subsection{From separate regressions to shared representations}
\label{subsec:from-t-to-tarnet}

The most direct approach to estimating the mediator CATE is the T-learner \citep{Kunzel2019}: fit two separate response surfaces $\widehat{\mu}_0(x)$ and $\widehat{\mu}_1(x)$ on the control and treated groups respectively, and define $\widehat{\tau}_{T}(x) = \widehat{\mu}_{1}(x) - \widehat{\mu}_{0}(x)$.  While consistent, the T-learner can exhibit larger finite-sample estimation error than methods that share information across treatment arms.  Because $\widehat{\mu}_{0}$ and $\widehat{\mu}_{1}$ are estimated on disjoint subsets, each model is regularised independently, and when the two groups differ in size, the regularisation schedules diverge.  This regularisation mismatch produces variation in $\widehat{\tau}_{T}(x)$ that does not correspond to treatment-effect heterogeneity \citep{Kunzel2019}.  More formally, the error bound for the T-learner scales as $2\{\varepsilon(\hat\mu_1) + \varepsilon(\hat\mu_0)\}$, where $\varepsilon(\hat\mu_w)$ denotes the estimation error of each of the $w=0,1$ arm-specific model \citep{curth2021nonparametric}.  Learners that share structure between arms can achieve tighter bounds by exploiting a common signal \cite{curth2021nonparametric}.

A more fundamental limitation is that the T-learner cannot exploit structural similarities between the two response surfaces.  In many applications, the conditional expectations $\mu_1(x)$ and $\mu_0(x)$ share substantial low-dimensional structure; the treatment shifts the outcome without radically altering which features are relevant \citep{Kennedy2023}.  Separate models must therefore recover the common signal independently in each arm, even though that signal is largely shared, and under balanced 1:1 treatment allocation, each model uses at most half of the available data.
These considerations motivate the use of representation learning as an inductive bias for CATE estimation.  Instead of separate models, we learn a shared feature map $\Phi\colon\R^p\to\R^r$ that compresses the covariate space into a common representation for both treatment arms.  Treatment-specific heads $h_0$ and $h_1$ then map from this representation to the arm-specific conditional expectations.  The key structural assumption, formalised by \citet{shalit2017}, is that there exists a common feature space underlying both $\{\mu_w(x)\}_{w\in\{0,1\}}$, so that $\Phi(X)$ preserves all identifying conditional independence relationships \citep{curth2021nonparametric, shalit2017}.

By pooling all observations through the shared layers, the representation~$\Phi$ is estimated with the full sample, regardless of treatment assignment.  This yields two advantages directly relevant for the G-estimation framework.  First, when treatment groups are of unequal size, the shared layers stabilise estimation by pooling information across arms \citep[variance reduction under imbalance;][]{curth2021nonparametric}.  Second, because the CATE is computed as a difference of two functions of the same representation, the smoothness and dimensionality of~$\Phi$ provide an inductive bias that may help regularise the treatment effect surface, particularly when $\mu_0$ and $\mu_1$ share substantial low-dimensional structure \citep{Kennedy2023,shalit2017}.



\subsection{Direct TARNet estimation of the mediator CATE}
\label{subsec:tarnet}

We adopt a TARNet \citep{shalit2017} as implemented in the SNet-1 architecture of \citet{curth2021nonparametric}.  The network consists of three components:
\begin{enumerate}[label=(\roman*)]
\item A \textbf{shared representation} $\Phi\colon\R^p\to\R^r$, parameterised as a multi-layer feed-forward network with ELU activations, mapping the raw covariate vector~$X$ into a latent feature space.

\item Two \textbf{treatment-specific heads} $h_0,h_1\colon\R^r\to\R$, each a smaller dense network, predicting the arm-specific conditional expectations:
\[
\hat m_0(x)=h_0\!\bigl(\Phi(x)\bigr),\qquad
\hat m_1(x)=h_1\!\bigl(\Phi(x)\bigr).
\]

\item The \textbf{mediator CATE} is obtained directly as the contrast:
\begin{equation}\label{eq:tau-tarnet}
\hat\tau_M(x) \;=\; h_1\!\bigl(\Phi(x)\bigr) \;-\; h_0\!\bigl(\Phi(x)\bigr).
\end{equation}
\end{enumerate}

The entire network is trained end-to-end by minimising the factual loss
\begin{equation}\label{eq:tarnet-loss}
\widehat{\mathcal{L}}_{\mathrm{TARNet}}
\;=\;
\frac{1}{n}\sum_{i=1}^n
\left\{M_i - \bigl[(1-T_i)\,h_0\!\bigl(\Phi(X_i)\bigr)
+ T_i\,h_1\!\bigl(\Phi(X_i)\bigr)\bigr]\right\}^2
+ \lambda\,\mathcal{R}(h_0,h_1,\Phi),
\end{equation}
where $\mathcal{R}(\cdot)$ is an $L_2$-regularisation penalty on all network weights.  Each observation contributes through exactly one head (corresponding to its observed treatment), while both treated and control units contribute to the shared representation~$\Phi$.
\newline

\begin{figure}[H]
\centering
\resizebox{\linewidth}{!}{%
\begin{tikzpicture}[
    >={Latex[length=2.2mm,width=1.8mm]},
    font=\small,
    neuron/.style          ={circle, draw=black!70, minimum size=4.5mm, inner sep=0pt, fill=white},
    hiddenlayer/.style     ={rectangle, draw=black!55, rounded corners=1.5pt,
                             minimum width=7mm, minimum height=24mm, fill=gray!18},
    replayer/.style        ={rectangle, draw=black!75, thick, rounded corners=1.5pt,
                             minimum width=7mm, minimum height=24mm, fill=gray!38},
    headlayer/.style       ={rectangle, draw=black!55, rounded corners=1.5pt,
                             minimum width=7mm, minimum height=14mm, fill=blue!12},
    headlayertreated/.style={headlayer, fill=red!14},
    iolabel/.style         ={font=\small},
    op/.style              ={circle, draw=black, thick, minimum size=6.5mm, inner sep=0pt, fill=white},
    grouplabel/.style      ={font=\footnotesize\itshape, text=black!75},
    groupbox/.style        ={draw=black!40, dashed, rounded corners=3pt, inner sep=3mm},
    trainbox/.style        ={draw=black!25, rounded corners=5pt, inner sep=6.5mm, fill=black!3},
    arr/.style             ={->, thick, black!70},
    arrth/.style           ={->, very thick, black!85},
]

\foreach \y/\name in {1.6/x1, 0.8/x2, 0/x3, -0.8/x4, -1.6/x5}
    \node[neuron] (\name) at (0,\y) {};
\node[iolabel, above=2mm of x1] {$X \in \mathbb{R}^{p}$};

\node[hiddenlayer] (s1) at (2.0, 0) {};
\node[hiddenlayer] (s2) at (2.9, 0) {};
\node[replayer]    (s3) at (3.8, 0) {};

\node[headlayertreated] (h1a) at (6.0,  1.7) {};
\node[headlayertreated] (h1b) at (6.9,  1.7) {};
\node[headlayer]        (h0a) at (6.0, -1.7) {};
\node[headlayer]        (h0b) at (6.9, -1.7) {};

\node[iolabel] (m1) at (9.9,  1.7) {$\hat m_1(X)=h_1\!\bigl(\Phi(X)\bigr)$};
\node[iolabel] (m0) at (9.9, -1.7) {$\hat m_0(X)=h_0\!\bigl(\Phi(X)\bigr)$};

\node[op] (sub) at (13.0, 0) {$-$};
\node[iolabel, right=4mm of sub, font=\normalsize] (tau) {$\hat\tau_M(X)$};

\foreach \i in {1,2,3,4,5}{ \draw[arr] (x\i.east) -- (s1.west); }
\draw[arr] (s1) -- (s2);
\draw[arr] (s2) -- (s3);
\draw[arr] (s3.east) to[out=25,in=180]  (h1a.west);
\draw[arr] (s3.east) to[out=-25,in=180] (h0a.west);
\draw[arr] (h1a) -- (h1b);
\draw[arr] (h0a) -- (h0b);
\draw[arr] (h1b.east) -- (m1.west);
\draw[arr] (h0b.east) -- (m0.west);
\draw[arrth] (m1.east) -- ++(0.45,0) |- (sub.north);
\draw[arrth] (m0.east) -- ++(0.45,0) |- (sub.south);
\draw[arrth] (sub.east) -- (tau.west);

\begin{scope}[on background layer]
  \node[trainbox, fit=(s1)(h1a)(h1b)(h0a)(h0b)] (trainregion) {};
  \node[groupbox, fit=(s1)(s2)(s3)] (sharedbox) {};
  \node[groupbox, fit=(h1a)(h1b)]   (h1box) {};
  \node[groupbox, fit=(h0a)(h0b)]   (h0box) {};
\end{scope}

\node[grouplabel] at ($(s1)!0.5!(s2) + (0,-1.72)$) { };
\node[grouplabel] at ($(s3) + (0,-1.72)$) {$\Phi(X)$};
\node[grouplabel, above=1.5mm of h1box] {treated head $h_1$};
\node[grouplabel, below=1.5mm of h0box] {control head $h_0$};
\node[grouplabel, anchor=north west, text=black!55]
      at ([xshift=1mm,yshift=-1mm]trainregion.north west)
      {};
\end{tikzpicture}%
}
\caption{TARNet (SNet--1) architecture for estimating the mediator CATE
$\hat\tau_M(X)$. The covariate vector $X\!\in\!\mathbb{R}^{p}$ passes through a
stack of shared layers whose final layer is the representation $\Phi(X)$. This
representation feeds two treatment-specific heads, $h_0$ for control units and
$h_1$ for treated units. The shaded region is trained end-to-end on the factual
loss in Eq.~\eqref{eq:tarnet-loss}: each observation contributes only through the head
matching its observed treatment, while the shared layers are updated from all
units. The mediator CATE in Eq.~\eqref{eq:tau-tarnet} is obtained at evaluation time
as the contrast $\hat\tau_M(X)=h_1\!\bigl(\Phi(X)\bigr)-h_0\!\bigl(\Phi(X)\bigr)$.}
\label{fig:tarnet-arch}
\end{figure}

\subsection{Cross-fitting both nuisance components}
\label{subsec:cross-fitting}

To separate nuisance estimation from score evaluation, we cross-fit both nuisance components entering Stage~2: the mediator CATE $\tau_M(X)$ and the baseline outcome function~$g(X)$.
We model the nuisance baseline as a linear working model, $g(X)=\beta^\top b(X)$, where $b(X)$ is a fixed basis expansion that may include polynomial terms of the raw covariates.  Thus $g$ is linear in its coefficients even when it is polynomial in the original variables, avoiding introducing an additional nonlinear nuisance layer in Stage~2.

In principle, $g(x)$ can be nonlinear and specified more flexibly, for example with a second neural network. The main consequences would be two. First, the baseline nuisance is fit to the transformed
outcome
\[
  \tilde Y_i(\theta)=Y_i-\theta_1 T_i-\theta_2 M_i,
\]
so the target of the baseline regression changes whenever $\theta$ is
updated. This implies that it becomes necessary to adopt a general estimation strategy: alternating update of $g$ and $\theta$ \citep{ZhengZhou2015}.  The second consequence is that it becomes necessary to estimate the $\Omega^{-1}(X)$ component. Under our linear in-parameters specification both complications collapse, the update of $\theta$ reduces to
a one-step G-estimation equation, and $\Omega^{-1}(X)$ need not be
estimated explicitly. However, we present the alternating algorithm, as it is the more general method and remains applicable when $g(x)$ is extended beyond the linear working model $b(x)$. A more detailed discussion of these aspects is provided in \ref{subsec:omega-role}.
  Let $\{I_k\}_{k=1}^K$ be a partition of the sample into $K$ folds, and let $I_k^c$ denote the complementary training set.

\paragraph{Stage~1 nuisance: mediator CATE.}
For each fold $k=1,\dots,K$, we train a TARNet on $I_k^c$ by minimising the factual loss in Eq.~\eqref{eq:tarnet-loss}, obtaining fold-specific estimators $\Phi^{(-k)}, h_0^{(-k)}, h_1^{(-k)}$.  For every unit $i \in I_k$, we compute the out-of-fold mediator CATE
\[
\hat\tau_M^{(-)}(X_i)
\;=\;
h_1^{(-k)}\!\bigl(\Phi^{(-k)}(X_i)\bigr)
\;-\;
h_0^{(-k)}\!\bigl(\Phi^{(-k)}(X_i)\bigr).
\]
Stacking across folds yields cross-fitted estimates $\hat\tau_M^{(-)}(X_i)$ for all~$i$.

\paragraph{Stage~2 nuisance: baseline outcome model.}
The baseline function is estimated foldwise inside Stage~2 as well.  Given a current iterate~$\theta^{(t)}$, define the transformed outcome $\tilde Y_i(\theta^{(t)}) = Y_i - H_i^\top\theta^{(t)}$ with $H_i = (T_i, M_i)^\top$.  For each fold~$k$, fit the baseline learner on $I_k^c$ using $\{\tilde Y_i(\theta^{(t)}), X_i : i \in I_k^c\}$, and denote the resulting estimator by $\hat g_{\theta^{(t)}}^{(-k)}$.  For each held-out observation $i\in I_k$, define the out-of-fold prediction $\hat g_{\theta^{(t)}}^{(-)}(X_i) = \hat g_{\theta^{(t)}}^{(-k)}(X_i)$.

Cross-fitting is relevant for the asymptotic argument in Appendix~\ref{app:inference}. In particular, when the same sample is used both to estimate~$g$ and to evaluate the score, the empirical process remainder from plugging in~$\hat g$ requires additional regularity conditions \citep[e.g., Donsker-class membership;][]{Chernozhukov2018} that may not hold for flexible learners.

Cross-fitting the baseline removes this dependence: conditional on the training folds, the held-out score contributions are mean-zero, and the baseline remainder becomes of order $\|\hat g_\theta^{(-)} - g_{0,\theta}\|_{L_2(P)}$, which is negligible under consistency \citep{Chernozhukov2018}. 
\subsection{G-estimation weights}
\label{subsec:weights}

We now explicitly specify how the CATE on the mediator output of the TARNet model is incorporated into the G-estimation equation. Substituting the cross-fitted estimates into Eq.~\eqref{eq:optimal-weight}, the plug-in weight vector for unit~$i$ is
\begin{equation}\label{eq:What}
\hat W(X_i) \;=\; \begin{pmatrix} 1 \\[3pt] \hat\tau_M^{(-)}(X_i) \end{pmatrix}
\;=\; \begin{pmatrix} 1 \\[3pt] h_1^{(-k(i))}\!\bigl(\Phi^{(-k(i))}(X_i)\bigr) - h_0^{(-k(i))}\!\bigl(\Phi^{(-k(i))}(X_i)\bigr) \end{pmatrix},
\end{equation}
and the empirical optimal weight becomes
\begin{equation}\label{eq:Ahat}
\hat A(T_i,X_i) \;=\; \bigl(T_i - p\bigr)\,\hat W(X_i),
\end{equation}
where $p$ is the known allocation probability (or $\bar T$ if estimated).

The representation~$\Phi$ determines the quality of $\hat\tau_M(X)$ and thereby the informativeness of the weight matrix~$\hat A$.  If~$\Phi$ captures all relevant heterogeneity in the $T\to M$ effect, the resulting weights provide a good plug-in approximation to the heterogeneity component of the semiparametrically efficient instrument \citep[cf.\ Theorem~3 in][]{ZhengZhou2015}, thereby improving the efficiency of~$\hat\theta$. Conversely, if~$\Phi$ is chosen too coarsely and discards features that generate true effect heterogeneity, the weights lose informative signal and $\hat\theta$ becomes less efficient, although it still remains consistent under the NEH assumption.

\begin{algorithm}[th]
\caption{UNIT: Cross-fitted two-stage G-estimation}\label{alg:unit}
\begin{algorithmic}[1]
\REQUIRE Data $\mathcal{D} = \{Y_i, T_i, M_i, X_i\}_{i=1}^n$; number of folds~$K$; propensity~$p$; basis $H_i=(h_1,\dots,h_K)^\top$ of model~\eqref{eq:smm}; baseline class~$\mathcal{G}$; cross-fitted CATE learner; tolerance~$\varepsilon$.
\STATE Partition $\mathcal{D}$ into $K$ folds $\{I_k\}_{k=1}^K$.
\medskip
\STATE \textbf{--- Stage 1: Cross-fitted optimal weight ---}
\FOR{$k = 1, \dots, K$}
    \STATE Train the CATE learner on $I_k^c$; we use TARNet $(\Phi^{(-k)}, h_0^{(-k)}, h_1^{(-k)})$.
    \FOR{$i \in I_k$}
        \STATE $\hat\tau_M^{(-)}(X_i) \leftarrow h_1^{(-k)}(\Phi^{(-k)}(X_i)) - h_0^{(-k)}(\Phi^{(-k)}(X_i))$
    \ENDFOR
\ENDFOR
\STATE Form the plug-in centred instrument: $\hat W(X_i) = (1,\; \hat\tau_M^{(-)}(X_i))^\top$, \quad $\hat A_i = (T_i - p)\,\hat W(X_i)$. \COMMENT{$\hat A_i$ fixed hereafter}
\medskip
\STATE \textbf{--- Stage 2: Cross-fitted G-estimation of $\theta$ ---}
\STATE Goal: solve the estimating equation~\eqref{eq:gest-emp}, $\ \sum_i \hat A_i\{Y_i - \hat g^{(-)}(X_i) - H_i^\top\theta\}=0$, for $\hat\theta$.
\STATE Initialise $\theta^{(0)}$;\ \ $t \leftarrow 0$.
\REPEAT
    \FOR[cross-fitted baseline on transformed outcome $\tilde Y_i(\theta)=Y_i-H_i^\top\theta$]{$k = 1, \dots, K$}
        \STATE Fit $\hat g_{\theta^{(t)}}^{(-k)} \leftarrow \arg\min_{g \in \mathcal{G}} \sum_{i \in I_k^c} \bigl[Y_i - H_i^\top\theta^{(t)} - g(X_i)\bigr]^2$
        \STATE Set $\hat g_{\theta^{(t)}}^{(-)}(X_i) \leftarrow \hat g_{\theta^{(t)}}^{(-k)}(X_i)$ for all $i \in I_k$
    \ENDFOR
    \STATE Structural update: $\theta^{(t+1)} \leftarrow \bigl(\sum_i \hat A_i H_i^\top\bigr)^{-1} \sum_i \hat A_i \bigl[Y_i - \hat g_{\theta^{(t)}}^{(-)}(X_i)\bigr]$ \COMMENT{Eq.~\eqref{eq:theta-hat}}
    \STATE $t \leftarrow t + 1$
\UNTIL{$\|\theta^{(t)} - \theta^{(t-1)}\| < \varepsilon$}
\STATE \quad\COMMENT{Linear baseline $g=\beta^\top b(X)$: the system is linear and converges in one iteration;}
\STATE Set $\hat\theta \leftarrow \theta^{(t)}$.
\medskip
\STATE \textbf{--- Inference ---}
\STATE Out-of-fold residuals: $\hat\psi_i = Y_i - \hat g_{\hat\theta}^{(-)}(X_i) - H_i^\top \hat\theta$
\STATE $\hat G_\theta = n^{-1}\sum_i \hat A_i H_i^\top$, \quad $\hat S = n^{-1}\sum_i \hat\psi_i^2\, \hat A_i \hat A_i^\top$
\STATE $\widehat{\mathrm{Var}}(\hat\theta) = n^{-1}\, \hat G_\theta^{-1}\, \hat S\, \hat G_\theta^{-\top}$ \COMMENT{Eq.~\eqref{eq:sandwich}}
\ENSURE $\hat\theta$, $\widehat{\mathrm{Var}}(\hat\theta)$
\end{algorithmic}
\end{algorithm}

\paragraph{Inference.}
Under the regularity conditions stated in Appendix~\ref{app:inference}, including an $n^{-1/4}$ rate for the cross-fitted mediator CATE and consistency of the cross-fitted baseline, the G-estimator~$\hat\theta$ is asymptotically normal with a sandwich-type variance (see Proposition~\ref{prop:normality} and Appendix~\ref{app:rep-gap} for additional details and derivations).  A consistent estimator of the asymptotic variance is
\begin{equation}\label{eq:sandwich}
\widehat{\Var}(\hat\theta)
\;=\;
\frac{1}{n}\,\hat G_\theta^{-1}\,\hat S\,\hat G_\theta^{-\top},
\end{equation}
where $\hat G_\theta = n^{-1}\sum_i \hat A_i H_i^\top$ and $\hat S = n^{-1}\sum_i \hat\psi_i^2\,\hat A_i\hat A_i^\top$ with out-of-fold residuals $\hat\psi_i = Y_i - \hat g_{\hat\theta}^{(-)}(X_i) - H_i^\top\hat\theta$.

\paragraph{Quality of the estimator.} It is useful to separate three distinct roles that govern the quality of the final estimator:
\begin{enumerate}
\item Identification of~$\theta$ requires treatment ignorability (Assumption~\ref{ass:ignorability}), the NEH assumption (Assumption~\ref{ass:neh}), correct structural specification (Assumption~\ref{ass:model-spec}), and non-degeneracy of the weight matrix (Assumption~\ref{ass:nondegen}).  The centering condition $\E[(T-p)\mid X]=0$ follows from Assumption~\ref{ass:ignorability} under randomisation.
\item Estimation quality of $\tau_M(X)$ determines how closely the plug-in weight $\hat W(X)$ approximates the oracle optimal weight $W^*(X)$.  This is governed by the CATE estimation error, which depends on the learner architecture, the sample size, and the complexity of the true $\tau_M$ \citep{curth2021nonparametric}.
\item Instrument strength determines how much precision the weight matrix delivers, even at the oracle.  The centred weights $(T-p)W(X)$ function as instruments for the structural parameters.  If the underlying DGP has weak treatment-by-covariate interaction on the mediator, i.e., $\Var(\tau_M(X))$ is small, the instruments are weak, and $\hat\theta_2$ will be imprecise regardless of how well $\hat\tau_M$ approximates the truth.
\end{enumerate}
\section{Simulation Study}\label{sec:simulation}

We construct a DGP that mimics typical features of modern and challenging psychological and biomedical data \citep{Micceri1989, Blanca2013, BonoBlancaArnauGomezBenito2017}: non-Gaussian covariates, complex nonlinear effects, and unmeasured mediator--outcome confounding \citep{RudolphGoinPaksarianCrowderMerikangasStuart2019}.
We observe i.i.d.\ tuples $(X_i, T_i, M_i, Y_i)$, $i=1,\dots,n$: covariates
$X\in\R^{25}$, a randomized binary treatment $T\in\{0,1\}$, a continuous mediator
$M$, and a continuous outcome $Y$. Each of the 7 scenarios that we study is derived by modifying one characteristic of the following base model:
\begin{align}
M_i &= \tau(X_i)\,T_i \;+\; \mu_0(X_i) \;+\; K_i \;+\; \varepsilon_{M,i}, \label{eq:M}\\
Y_i &= g(X_i) \;+\; \theta_1 T_i \;+\; \theta_2 M_i \;+\; \lambda_K K_i \;+\; \varepsilon_{Y,i}. \label{eq:Y}
\end{align}
Where $\tau(X)$ is the heterogeneous effect of $T$ on $M$ (the mediator
CATE, target of Stage~1); $\mu_0(X)$ a nonlinear baseline mediator level; $K$ an
unmeasured confounder entering both structural equations (loading $\lambda_K$);
$g(X)=\sum_{j=0}^{6}[0.4X_j+0.2X_j^2\mathbf1\{j\text{ even}\}]$ a nonlinear
nuisance. Parameters $\theta_1=1$ (direct effect), $\theta_2=0.5$ (the mediation
coefficient of interest), $\lambda_K=1$. Treatment is randomized, $T\indep X$,
with known $e(X)\equiv\Pr(T{=}1\mid X)=0.5$. The total effect of $T$ on $Y$
decomposes as $\mathrm{TE}(x)=\theta_1+\theta_2\,\tau(x)$.
Below, we introduce the base generating process for each component, the specific modifications will be made explicit in each of the respective scenarios.

\paragraph{Covariates.}
  Each $X\in\R^{25}$ is generated via a Gaussian copula with Gaussian mixture model (GMM) marginals. In particular we
  draw $Z_i\sim\mathcal{N}_{25}(0,\Sigma)$ with $\Sigma_{jl}=\rho^{|j-l|}$ ($\rho=0.5$), transform to uniform margins $U_{ij}=\Phi(Z_{ij})$, and apply the inverse GMM CDF componentwise to obtain $X_{ij}=F_j^{-1}(U_{ij})$.
  Each marginal $F_j$ is a mixture of $n_j\in\{5,\ldots,10\}$ Gaussian components (independently configured per feature, and number of components sampled uniformly), with component means drawn from $\mathrm{Uniform}(-2,2)$, standard deviations from $\mathrm{Uniform}(0.5,1.5)$, and weights
  proportional to $\mathrm{Uniform}(0.5,2)$ draws; the resulting distribution is min-max scaled to the interval $[0,5]$.
  This yields non-Gaussian, skewed, and multimodal marginals linked by moderate AR(1) copula dependence.

  The 25 features are partitioned into four blocks following the benchmark design of \citet{curth2021nonparametric} as depicted in Table~\ref{tab:sim_cov}:

\begin{table}[htbp]
\centering
\caption{Covariates sizes and roles}
\label{tab:covariates-sizes}
\begin{tabular}{llll}
  \toprule
  Block & Indices & Size & Role\\
  \midrule
  $X_{C,1}$ & 0--4   & 5  & Shared score $s_1$ (enters both $\tau$ and $\mu_0$)\\
  $X_{C,2}$ & 5--9   & 5  & Shared score $s_2$ (enters both $\tau$ and $\mu_0$)\\
  $X_D$     & 15--17 & 3  & Outcome-specific (enters $\mu_0$ via $r_d$)\\
  $X_\emptyset$ & 10--14,\,18--24 & 12 & Noise\\
  \bottomrule
\end{tabular}
\label{tab:sim_cov}
\end{table}

\subsection{Latent-score construction}
\label{subsec:tau-functions}

The treatment effect and baseline outcome are built from three fixed linear projections of the standardized covariates:
$z = (X - \bar X_{\mathrm{}})/s_{\mathrm{}}$:

\begin{align}
s_1 &= \mathbf{c}_1^\top\,z_{0:4}, \quad \mathbf{c}_1 \propto (1.00,\,-0.80,\,0.70,\,0.90,\,-0.70), \label{eq:s1}\\
s_2 &= \mathbf{c}_2^\top\,z_{5:9}, \quad \mathbf{c}_2 \propto (0.90,\,1.00,\,-0.90,\,0.75,\,-0.80), \label{eq:s2}\\
r_d &= \mathbf{c}_d^\top\,z_{15:17}, \quad \mathbf{c}_d \propto (0.90,\,-0.70,\,0.60), \label{eq:rd}
\end{align}

The scores $s_1$ and $s_2$ are shared entering both $\tau(X)$ and $\mu_0(X)$ and provide the joint latent structure.  The treatment-effect function and baseline outcome are:
\begin{align}
\tau(X) &\;=\; 0.45\,\tanh(s_1 s_2)
         \;+\; 0.35\,\sin(s_1+s_2)\,\sigma(s_1-s_2)
         \;+\; 0.20\,\sin(2s_1)\cos(2s_2), \label{eq:tau-honest}\\
\mu_0(X) &\;=\; 0.50\,\mathrm{softplus}(0.8\,s_1 - 0.4\,s_2) \;+\; 0.30\,\tanh(r_d) \;+\; 0.20\,(s_2^2 - 1), \label{eq:mu0-honest}
\end{align}
where $\sigma(\cdot)$ is the logistic function and $\mathrm{softplus}(x)=\log(1+e^x)$.

\paragraph{Unmeasured confounder.}
The unmeasured confounder shares the same distributional family as the observed confounders. In particular we construct the unmeasured confounder~$K$ as a GMM draw mixed with Gaussian noise:
\begin{equation}\label{eq:K-construction}
K \;=\; 0.70\,K_{\mathrm{gmm}} \;+\; 0.30\,\mathcal{N}(0,1),
\end{equation}
where $K_{\mathrm{gmm}}$ is drawn from a three-component Gaussian mixture with means $(-1.2,\,0.3,\,1.5)$, standard deviations $(0.8,\,0.6,\,0.9)$, and mixture weights $(0.40,\,0.35,\,0.25)$.

\paragraph{Variance calibration and scenarios.}
The components $(\tau\!\cdot\!T,\,\mu_0,\,K,\,\varepsilon_M)$ are independently rescaled to achieve target variance shares.

\subsection{The seven scenarios}\label{sec:scenarios}
We design the simulation study to investigate two orthogonal dimensions: identification and efficiency. Specifically, we examine the efficiency of the methods by varying the degree of nonlinearity and the magnitude of $\tau$, while holding the identification structure fixed; this is implemented in scenarios A, B, E, and Wtau. Conversely, we assess the impact of identification by varying the identification conditions while keeping the baseline surface characteristics fixed in scenarios C and D.
\begin{table}[htbp]
\centering
\caption{Simulation scenarios: variance shares by component, NEH status, and predicted bias of $\hat{\theta}_2$.}
\label{tab:simulation-scenarios}
\small
\begin{tabular}{@{}lll l@{}}
\toprule
Scenario & Shares $(\tau,\mu_0,K,\varepsilon_M)$ & \NEH & Pred.\ bias $\hat\theta_2$\\
\midrule
A      & $(0.25,0.33,0.20,0.22)$ & holds    & $0$\\
B      & $(0.40,0.28,0.18,0.14)$ & holds    & $0$\\
C      & $(0.25,0.33,0.20,0.22)$ & holds    & $0$\\
D      & $(0.25,0.33,0.20,0.22)$ & violated & $\gamma\kappa=0.16$\\
Dsmall & $(0.25,0.33,0.20,0.22)$ & violated & $\gamma\kappa=0.04$\\
E      & $(0.25,0.33,0.20,0.22)$ & holds    & $0$\\
Wtau   & $(0.08,0.50,0.20,0.22)$ & holds    & $0$\\
\bottomrule
\end{tabular}
\end{table}

Throughout, $\varepsilon_{M,i}\sim\mathcal N(0,1)$,
$\varepsilon_{Y,i}\sim\mathcal N(0,\sigma_{\varepsilon_Y}^2)$,
$\bar K_i=a_K K_i$, and $(a_\tau,a_D,a_K,a_\varepsilon)$ are the
calibration scales for the cell's shares.

\paragraph{A: realistic heterogeneity ($\tau$-share $25\%$).}
\begin{equation}\label{eq:A}
{
\begin{aligned}
M_i &= a_\tau\,\tau(X_i)\,T_i + a_D\,\mu_0(X_i) + \bar K_i + a_\varepsilon\,\varepsilon_{M,i},\\
Y_i &= g(X_i) + \theta_1 T_i + \theta_2 M_i + \lambda_K\,\bar K_i + \varepsilon_{Y,i},
\qquad \sigma_{\varepsilon_Y}^2=\sigma_0^2-\lambda_K^2\widehat\Var(\bar K).
\end{aligned}\;}
\end{equation}
This is the reference scenario NEH holds; predicted bias $0$.

\paragraph{B: strong heterogeneity ($\tau$-share $40\%$).}
This case has exactly the same structure as in Eq.~\eqref{eq:A}, but assigns a much larger weight to the $\tau$ component, namely $(0.40,0.28,0.18,0.14)$, thus corresponding to an “ideal instrument” setting. Under this specification, NEH\ is satisfied and the predicted bias equals $0$.

\paragraph{C: rank preservation fails, \NEH\ holds.}
The mediator is specified as in Eq.~\eqref{eq:A}. The outcome slope with respect to \(M\) becomes idiosyncratic through the inclusion of the term \(\xi_i\):
\begin{equation}\label{eq:C}
{
\begin{aligned}
M_i &= a_\tau\,\tau(X_i)\,T_i + a_D\,\mu_0(X_i) + \bar K_i + a_\varepsilon\,\varepsilon_{M,i},\\
Y_i &= g(X_i) + \theta_1 T_i + (\theta_2+\xi_i)\,M_i + \lambda_K\,\bar K_i + \varepsilon_{Y,i},
\quad \xi_i\sim\mathcal N(0,s_\xi^2)\ \text{i.i.d.},
\end{aligned}\;}
\end{equation}
The individual rank preservation assumption fails,
but NEH holds. Predicted bias $0$.

\paragraph{D: essential heterogeneity, \NEH\ violated ($\gamma=\kappa=0.4$).}
A latent $L_i\sim\mathcal N(0,1)$, independent of $(X,T,K,\varepsilon_M,\varepsilon_Y)$,
enters both equations:
\begin{equation}\label{eq:D}
{
\begin{aligned}
M_i &= a_\tau\,\tau(X_i)\,(1+\kappa L_i)\,T_i + a_D\,\mu_0(X_i) + \bar K_i + a_\varepsilon\,\varepsilon_{M,i},\\
Y_i &= g(X_i) + \theta_1 T_i + \theta_2 M_i + \gamma\,L_i M_i + \lambda_K\,\bar K_i + \varepsilon_{Y,i},
\end{aligned}}
\end{equation}
The same $L$ modulates the mediator's treatment response ($\kappa L$) and the outcome's mediator slope ($\gamma L$). \NEH\ violated; predicted bias $\gamma\kappa=0.16$.

\paragraph{Dsmall: half amplitude ($\gamma=\kappa=0.2$).}
Eq.~\eqref{eq:D} with $\gamma=\kappa=0.2$, predicted bias $0.04$. 

\paragraph{E: CATE (mostly) linear, \NEH\ holds.}
The response surface is replaced, by an orthogonalized combination of linear and nonlinear components.
$\tau_E$; $\mu_0,K$, shares, noise are those of scenario A:
\begin{equation}\label{eq:E}
{
\begin{aligned}
M_i &= a_\tau\,\tau_E(X_i)\,T_i + a_D\,\mu_0(X_i) + \bar K_i + a_\varepsilon\,\varepsilon_{M,i},\\
Y_i &= g(X_i) + \theta_1 T_i + \theta_2 M_i + \lambda_K\,\bar K_i + \varepsilon_{Y,i},
\quad \sigma_{\varepsilon_Y}^2=\sigma_0^2-\lambda_K^2\widehat\Var(\bar K),
\end{aligned}}
\end{equation}
\begin{equation}\label{eq:E-tau}
\tau_E(X)=w_{\mathrm{nl}}\,\frac{z_\perp(X)}{\mathrm{sd}(z_\perp)}
         +w_{\mathrm{lin}}\,\frac{z_{\mathrm{lin}}(X)}{\mathrm{sd}(z_{\mathrm{lin}})},
\quad z_{\mathrm{lin}}=0.5 s_1+0.5 s_2,\;\;
z_\perp=z_{\mathrm{nl}}-\beta\,z_{\mathrm{lin}},
\end{equation}

$z_{\mathrm{nl}}$ is built as in Eq.~\eqref{eq:tau-honest},
$\beta=\Cov(z_{\mathrm{nl}},z_{\mathrm{lin}})/\Var(z_{\mathrm{lin}})$ and the
reference SDs computed once on a large population draw (DGP identical across $n$),
$(w_{\mathrm{nl}},w_{\mathrm{lin}})=(0.30,0.90)$. Realized full-linear $R^2$ of
$\tau_E$ averages $0.89$ (vs $0.19$ in scenario A). 

\paragraph{Wtau: weak instrument, \NEH\ holds.}
Eq.~\eqref{eq:A}, shares $(0.08,0.50,0.20,0.22)$: the $\tau$-share is
$8\%$, It corresponds to a weak-instrument regime, translating in a weak signal for Stage-1. NEH holds; predicted bias $0$.

\subsection{Stage~1 methods}
\label{subsec:stage1-methods}

All Stage~1 methods use 5-fold cross-fitting; implementation details and hyperparameters are given in the Estimator Hyperparameters appendix~\ref{app:hyper}.
We compare\footnote{ \textbf{Thin Plate Splines} were considered as a candidate middle-tier learner, motivated by their use in semiparametric mediator modeling \citep{Brandt2020}. However, in preliminary runs at $d=25$ TPS was numerically unstable despite regularisation, so we excluded it and do not report TPS results \cite{Kalogridis2023}}:
\begin{itemize}
\item \textbf{TARNet}: SNet-1 architecture \citep{curth2021nonparametric} with shared representation layers and treatment-specific heads.  The CATE is computed directly as $\hat\tau(X) = h_1(\Phi(X)) - h_0(\Phi(X))$.

\item \textbf{TNet (no shared representation)}: the same SNet-1 architecture as TARNet \citep{curth2021nonparametric} but without the shared representation layers, so each treatment arm is fitted separately. The CATE is computed as $\hat\tau(X) = h_1(X) - h_0(X)$. This learner serves as the ablation that isolates the value of the shared representation.
\item \textbf{Ridge (T-learner)}: Separate Ridge regressions for $T=0$ and $T=1$, with $\hat\tau(X) = \hat\mu_1(X) - \hat\mu_0(X)$.  Near-optimal when $\tau$ is linear but unable to capture interactions or nonlinearities.

\item \textbf{Random Forest (T-learner)} \cite{Breiman2001}: Separate random forests for $T=0$ and $T=1$.  Hyperparameters (number of estimators, maximum depth, maximum features fraction) are selected by 3-fold cross-validation with 6 random configurations per fold. Captures nonlinearity and feature interactions via tree splits but lacks TARNet's shared-representation inductive bias.
\item \textbf{Baron--Kenny estimator.}
The naive baseline is the \citet{BaronKenny1986} two-regression estimator. Because it omits the unmeasured confounder $K$ from the outcome regression, it is biased for $\theta_2$ by a positive amount that does not vanish with $n$. Empirically (see Table~\ref{tab:results}), this bias is $+0.27$ to $+0.39$ on the true $\theta_2=0.5$, with $0\%$ coverage in every cell, irrespective of sample size or signal strength.

\end{itemize}

 \subsection{Monte-Carlo design and evaluation metrics}\label{sec:metrics}

Each scenario is replicated over $R = 200$ random seeds, for sample sizes
$n \in \{500,\,1000,\,2000,\,5000,\,10000\}$. The low-sample-size settings
$n \in \{500,\,1000\}$ are included to assess the behaviour of the methods in small-sample regimes.\footnote{However, for applied analyses, we recommend avoiding sample sizes below 1000.}
The structural parameters $\theta_1=1.0$ and $\theta_2=0.5$ are fixed across replications. Each replication produces a single point estimate $\hat\theta_2^{(r)}$, the sandwich estimator $\mathrm{SE}_r$ from Eq.~\eqref{eq:sandwich}, and a confidence interval $\mathrm{CI}_r=\hat\theta_2^{(r)}\pm1.96\,\mathrm{SE}_r$.
The reported per-cell, per-method metrics are depicted in Table~\ref{tab:simulation-metrics}:
\begin{table}[htbp]
\centering
\caption{Performance metrics used in the simulation study, their definitions, and target interpretation.}
\label{tab:simulation-metrics}
\small
\begin{tabular}{@{}lll@{}}
\toprule
Metric & Definition & Target / reading\\
\midrule
Median bias & $\Med_r\big(\hat\theta_2^{(r)}-\theta_2\big)$ & $0$ if consistent\\
Empirical SD & $\mathrm{sd}_r\big(\hat\theta_2^{(r)}\big)$ & \\
Median SE & $\Med_r(\mathrm{SE}_r)$ & sandwich estimate of the above\\
SE calibration & $\overline{\mathrm{SE}}/\,\mathrm{sd}_r(\hat\theta_2^{(r)})$ & $\approx1$ $\Leftrightarrow$ valid sandwich\\
Coverage & $\frac1R\sum_r\mathbf 1\{\theta_2\in\mathrm{CI}_r\}$ & nominal $0.95$ (larger is better)\\
Power / rejection & $\frac1R\sum_r\mathbf 1\{0\notin\mathrm{CI}_r\}$ & power vs $H_0{:}\theta_2{=}0$\\
Stage-1 NRMSE & $\overline{\|\hat\tau-\tau\|_2/\|\tau-\bar\tau\|_2}$ & CATE quality; $0$ perfect, $1$ no better than the mean; smaller is better\\
Efficiency ratio & $\Med_r(\mathrm{SE}_r^{\text{method}}/\mathrm{SE}_r^{\text{TARNet}})$ & $>1\Leftrightarrow$ less efficient than TARNet\\
\bottomrule
\end{tabular}
\end{table}

A replication's Stage~2 is skipped if
the Stage-1 NRMSE $>1.3$, or $\mathrm{sd}(\hat\tau)<10^{-4}$. A replication is likewise excluded if Stage~2 does not
converge, any $\hat\theta$ or SE is non-finite. Table~\ref{tab:skip} reports skip counts
at $n=500$; at $n\ge1000$ skips fall below $2\%$ except for Wtau, and are zero at
$n\ge5000$.

\begin{table}[ht]\centering\small
\caption{Stage-2 skip counts at $n=500$ (out of $R=200$ seeds per learner). Skips
track Stage-1 difficulty: none for the strong instrument (B) or the near-linear
surface (E), heavy for the latent-noise channel (D) and the weak instrument
(Wtau), where every TARNet seed is skipped. Skips fall below $2\%$ at $n\ge1000$
(except Wtau) and vanish at $n\ge5000$.}
\label{tab:skip}
\begin{tabular}{@{}lcccc@{}}
\toprule
Scenario & Ridge & RF & TNet & TARNet\\
\midrule
A & 17 & 15 & 80 & 35\\
B & 10 & 4 & 2 & 0\\
C & 18 & 16 & 79 & 32\\
D & 26 & 27 & 138 & 104\\
Dsmall & 17 & 23 & 94 & 46\\
E & 0 & 6 & 3 & 1\\
Wtau & 102 & 66 & 200 & 200\\
\bottomrule
\end{tabular}
\end{table}

\subsection{Results}\label{sec:part2-results}

\paragraph{Stage-1 CATE quality.}
Stage-1 quality is measured as the learner’s normalized RMSE for $\hat\tau$ (see Figure~\ref{fig:nrmse}).  In the nonlinear surface scenarios (A, B, C, D, Dsmall), TARNet’s NRMSE decreases as $n$ grows, whereas Ridge levels off above $0.77$ (reflecting the irreducible error from linear misspecification). In scenario E (linear surface), Ridge’s NRMSE falls to $0.36$, while RF, which lacks a linear inductive bias, flattens at $0.61$. In the Wtau (weak instrument) scenario, the NRMSE increases for all learners, with TARNet exceeding the $1.3$ threshold at $n{=}500$.

\begin{figure}[ht]\centering
\includegraphics[width=\textwidth]{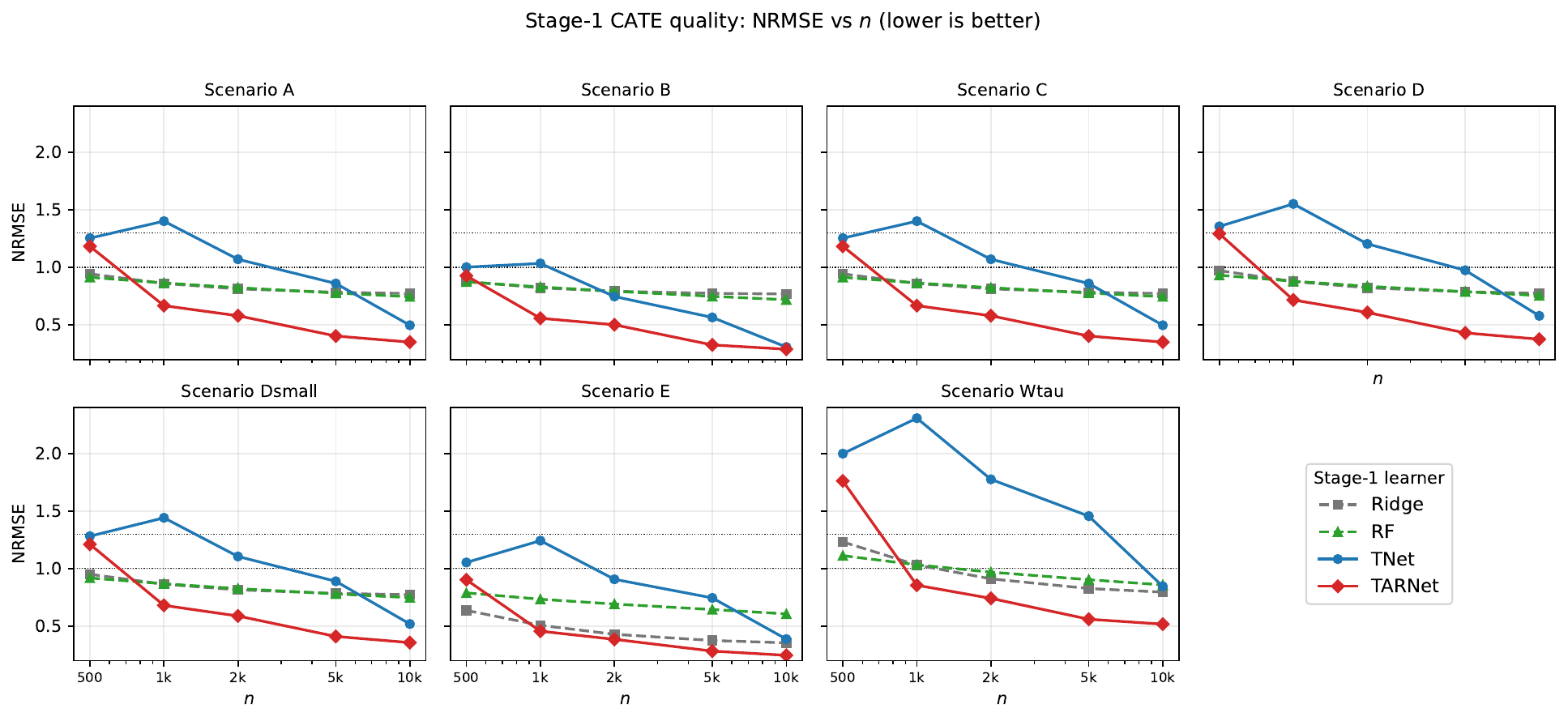}
\caption{Stage-1 NRMSE of $\hat\tau$ versus $n$ (log scale), one panel per
scenario, four learners. TARNet falls steadily on the nonlinear surfaces while
Ridge plateaus above $0.77$ (irreducible linear misspecification); on the
near-linear surface~E Ridge drops to $0.36$ while RF stalls at $0.61$, and under
the weak instrument (Wtau) every learner is inflated, TARNet exceeding the $1.3$
skip threshold at $n{=}500$.}
\label{fig:nrmse}
\end{figure}

The correlation $\Corr(\hat\tau,\tau)$
(Figure~\ref{fig:corr}) between true and estimated tau provides additional information about the performance of the learners.
Across all scenarios from A through Dsmall, a very similar pattern can be observed. TARNet yields the highest correlations across all sample sizes $n$, with correlations close to or above $0.8$ for $n \geq 1000$. For $n = 500$, all learners perform similarly in these scenarios, with relatively low correlations ranging between $0.4$ and $0.6$.

In scenario E, corresponding to the linear setting, both TARNet and Ridge achieve similarly high correlations above $0.9$ for $n \geq 1000$. In scenario Wtau, corresponding to the weak-instrument setting, a similar ranking of the learners is observed, although correlations are generally lower. In this case, TARNet reaches correlations close to $0.8$ only for $n \geq 5000$.

\begin{figure}[ht]\centering
\includegraphics[width=\textwidth]{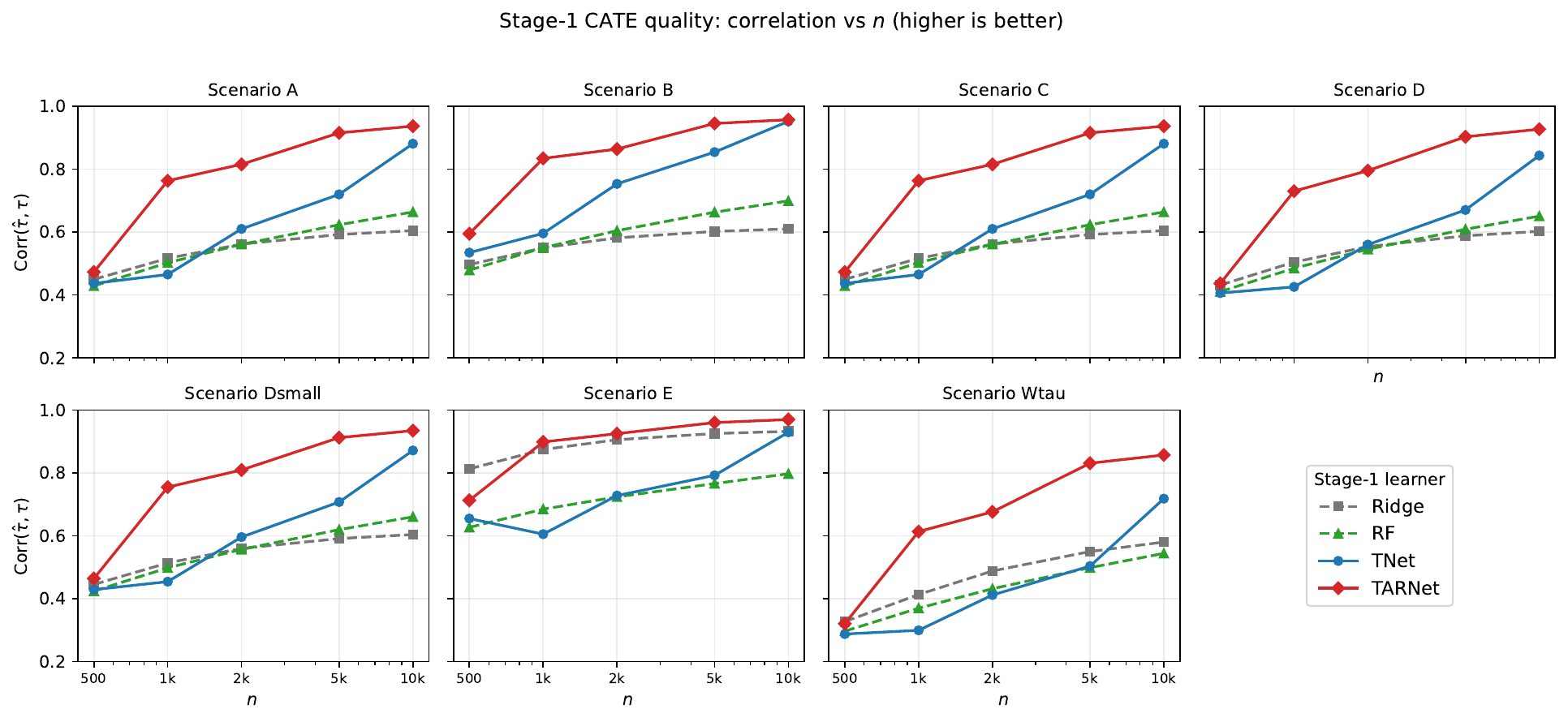}
\caption{Stage-1 $\Corr(\hat\tau,\tau)$ vs.\ $n$ by learner and scenario. TARNet (shared) leads at every $n$; TNet
(not-shared) starts low and converges toward TARNet by $n{=}10{,}000$ while
overtaking the RF around $n{=}2000$; on the near-linear surface E, Ridge is
strong throughout. Both the deep-learning advantage and the shared-representation
gap (TARNet $-$ TNet) are nonlinearity-specific.}
\label{fig:corr}
\end{figure}

\paragraph{Stage-2 estimation.}
Table~\ref{tab:results} summarizes the main patterns. In the settings where NEH holds (A, B, C,
E, Wtau), all learners are unbiased (albeit the presence of a minor empirical bias is to be expected) and achieve nominal coverage for $n\ge1000$, whereas BK exhibits
a positive bias of $+0.27$ to $+0.33$ and $0\%$ coverage at all sample sizes. When NEH is violated (D,
Dsmall), each learner converges to the expected $\gamma\kappa$ bias, and coverage
collapses as $n$ increases. This breakdown is most rapid for TARNet, whose narrower intervals concentrate
around the wrong parameter. The efficiency ratios indicate that TARNet achieves the smallest
SE in every nonlinear design (R/T, RF/T $\approx1.4$--$1.6$ for $n\ge1000$) and remains at
roughly $\approx1.0$ on the linear setting E. The TT/T column isolates the penalty of removing
the shared representation: TNet lags TARNet by $30$--$40\%$ in SE at moderate $n$
($1.3$--$1.4$ at $n{=}1000$--$2000$ in A, B, C, Dsmall), but this difference shrinks to
about $\approx1.0$ by $n{=}10{,}000$, indicating a finite-sample benefit of the shared representation rather than an asymptotic
one.

\begin{table}[!ht]\centering\footnotesize
\caption{Stage-2 results: median bias $\hat\theta_2-0.5$ with coverage rates in brakets for the four
learners, BK median bias, and median per-seed SE ratios R/T (Ridge$\div$TARNet),
RF/T (RF$\div$TARNet) and TT/T (TNet$\div$TARNet, i.e.\ not-shared $\div$ shared
neural; ratio of median SEs since TNet is a separate pool). ``E'' is the near-linear
scenario; Wtau at $n{=}500$ skips both neural learners in all seeds. Dashes (---) mark skipped cells (Table~\ref{tab:skip}); BK $=$ Baron--Kenny.}
\label{tab:results}
\begin{tabular}{@{}cl cccc c ccc@{}}
\toprule
& $n$ & Ridge $b$(cov) & RF $b$(cov) & TNet $b$(cov) & TARNet $b$(cov) & BK & R/T & RF/T & TT/T\\
\midrule
\multirow{5}{*}{A} & 500 & $+.029$ (.96) & $+.027$ (.95) & $-.029$ (.93) & $-.002$ (.93) & $+.299$ & 1.03 & 1.16 & 1.00\\
 & 1000 & $-.009$ (.96) & $-.003$ (.98) & $-.013$ (1.00) & $-.011$ (.95) & $+.302$ & 1.45 & 1.53 & 1.38\\
 & 2000 & $+.008$ (.97) & $+.017$ (.95) & $+.004$ (.96) & $+.008$ (.94) & $+.303$ & 1.41 & 1.43 & 1.33\\
 & 5000 & $-.008$ (.94) & $+.000$ (.94) & $+.001$ (.94) & $+.004$ (.96) & $+.303$ & 1.51 & 1.48 & 1.28\\
 & 10000 & $-.003$ (.97) & $-.003$ (.95) & $-.001$ (.95) & $-.001$ (.96) & $+.302$ & 1.53 & 1.42 & 1.06\\
\midrule
\multirow{5}{*}{B} & 500 & $+.014$ (.95) & $+.001$ (.94) & $-.001$ (.92) & $+.004$ (.93) & $+.271$ & 1.14 & 1.27 & 1.11\\
 & 1000 & $+.000$ (.95) & $-.004$ (.96) & $-.004$ (.94) & $-.003$ (.94) & $+.272$ & 1.48 & 1.51 & 1.40\\
 & 2000 & $+.004$ (.97) & $+.006$ (.95) & $-.000$ (.97) & $+.001$ (.94) & $+.272$ & 1.44 & 1.43 & 1.15\\
 & 5000 & $-.004$ (.93) & $-.001$ (.92) & $+.000$ (.94) & $+.002$ (.95) & $+.271$ & 1.53 & 1.44 & 1.11\\
 & 10000 & $-.003$ (.99) & $-.002$ (.95) & $-.000$ (.95) & $-.001$ (.96) & $+.273$ & 1.55 & 1.38 & 1.01\\
\midrule
\multirow{5}{*}{C} & 500 & $+.031$ (.96) & $+.011$ (.95) & $-.025$ (.95) & $+.013$ (.93) & $+.301$ & 1.04 & 1.15 & 0.98\\
 & 1000 & $+.001$ (.97) & $+.014$ (.98) & $+.001$ (1.00) & $-.009$ (.97) & $+.304$ & 1.41 & 1.49 & 1.37\\
 & 2000 & $+.010$ (.97) & $+.013$ (.97) & $+.009$ (.96) & $+.007$ (.95) & $+.299$ & 1.40 & 1.41 & 1.32\\
 & 5000 & $-.002$ (.95) & $-.003$ (.94) & $+.003$ (.96) & $+.003$ (.96) & $+.304$ & 1.50 & 1.45 & 1.26\\
 & 10000 & $-.006$ (.98) & $-.002$ (.96) & $-.000$ (.96) & $+.000$ (.95) & $+.301$ & 1.52 & 1.41 & 1.06\\
\midrule
\multirow{5}{*}{D} & 500 & $+.179$ (.87) & $+.174$ (.87) & $+.085$ (.92) & $+.128$ (.91) & $+.382$ & 0.97 & 1.04 & 1.07\\
 & 1000 & $+.140$ (.81) & $+.154$ (.82) & --- (---) & $+.151$ (.70) & $+.385$ & 1.36 & 1.46 & ---\\
 & 2000 & $+.160$ (.57) & $+.161$ (.54) & $+.163$ (.56) & $+.150$ (.34) & $+.386$ & 1.34 & 1.39 & 1.37\\
 & 5000 & $+.157$ (.28) & $+.156$ (.24) & $+.157$ (.14) & $+.162$ (.02) & $+.386$ & 1.47 & 1.45 & 1.32\\
 & 10000 & $+.153$ (.09) & $+.162$ (.00) & $+.158$ (.00) & $+.159$ (.00) & $+.384$ & 1.51 & 1.40 & 1.09\\
\midrule
\multirow{5}{*}{Dsmall} & 500 & $+.054$ (.95) & $+.057$ (.93) & $-.013$ (.93) & $+.027$ (.94) & $+.324$ & 1.02 & 1.07 & 1.03\\
 & 1000 & $+.029$ (.96) & $+.034$ (.94) & $+.062$ (1.00) & $+.020$ (.95) & $+.327$ & 1.41 & 1.50 & 1.42\\
 & 2000 & $+.045$ (.91) & $+.052$ (.92) & $+.039$ (.90) & $+.041$ (.87) & $+.325$ & 1.39 & 1.42 & 1.36\\
 & 5000 & $+.037$ (.84) & $+.040$ (.83) & $+.040$ (.82) & $+.044$ (.77) & $+.326$ & 1.51 & 1.46 & 1.29\\
 & 10000 & $+.034$ (.82) & $+.038$ (.76) & $+.040$ (.65) & $+.040$ (.60) & $+.325$ & 1.52 & 1.42 & 1.07\\
\midrule
\multirow{5}{*}{E} & 500 & $-.004$ (.95) & $+.008$ (.97) & $-.027$ (.94) & $-.017$ (.95) & $+.314$ & 0.89 & 1.14 & 1.06\\
 & 1000 & $+.001$ (.96) & $-.008$ (.96) & $+.006$ (.94) & $-.002$ (.96) & $+.318$ & 1.03 & 1.32 & 1.43\\
 & 2000 & $+.003$ (.94) & $+.008$ (.96) & $+.003$ (.95) & $+.002$ (.95) & $+.318$ & 1.02 & 1.28 & 1.29\\
 & 5000 & $-.004$ (.93) & $-.003$ (.94) & $-.002$ (.94) & $-.001$ (.93) & $+.316$ & 1.04 & 1.26 & 1.21\\
 & 10000 & $-.003$ (.97) & $-.002$ (.95) & $-.003$ (.95) & $-.001$ (.96) & $+.316$ & 1.04 & 1.22 & 1.04\\
\midrule
\multirow{5}{*}{Wtau} & 500 & $+.078$ (.93) & $+.148$ (.93) & --- (---) & --- (---) & $+.329$ & --- & --- & ---\\
 & 1000 & $-.017$ (.95) & $+.039$ (.96) & --- (---) & $-.055$ (.98) & $+.333$ & 1.39 & 1.52 & ---\\
 & 2000 & $+.009$ (.97) & $+.016$ (.96) & --- (---) & $+.004$ (.95) & $+.330$ & 1.35 & 1.56 & ---\\
 & 5000 & $-.015$ (.94) & $-.005$ (.94) & $+.123$ (1.00) & $+.007$ (.94) & $+.333$ & 1.42 & 1.68 & 1.71\\
 & 10000 & $-.008$ (.98) & $-.003$ (.96) & $-.007$ (.94) & $-.001$ (.95) & $+.332$ & 1.41 & 1.59 & 1.19\\
\bottomrule
\end{tabular}
\end{table}

 \section{Discussion}
  \label{sec:discussion}

  \subsection{Summary of findings}

  Our results trace a two-stage efficiency chain,
  \[
  \text{Better } \hat\tau_M \text{ (Stage~1)}
  \;\Longrightarrow\;
  \text{More informative weight } \hat A
  \;\Longrightarrow\;
  \text{More efficient } \hat\theta_2 \text{ (Stage~2)},
  \]
  Across the nonlinear
  scenarios the TARNet weight delivers Stage~2 standard errors about $1.5\times$
  smaller than a linear T-learner (median Ridge/TARNet ratio $1.45$--$1.51$ at
  $n\ge2{,}000$). 

Identification behaves exactly as the \NEH\ theory predicts, and independently of the Stage-1 learner. Where \NEH\ holds, every G-estimator is approximately unbiased with near-nominal coverage; where it fails (Scenario~D) all learners converge to the same predicted $\gamma\kappa$ offset, and coverage degrades predictably as the sample grows and the interval tightens
around the biased target. Scenario~C confirms that the method requires only \NEH\ and not rank preservation: its results are not significantly different from the baseline scenario. The naive Baron--Kenny estimator, carries a
persistent bias of $+0.27$ to $+0.39$ on $\theta_2=0.5$ with $0\%$ coverage in every cell, regardless of sample size or signal strength.

  \subsection{Shared representation versus separate arms}
Given the design of the simulation we can separate the two distinct mechanisms that drive the efficiency gains.

First, flexibility by itself is insufficient. Although the Random Forest is fully nonparametric, it only slightly outperforms Ridge, with $\Corr(\hat\tau_M,\tau_M)\approx0.65$--$0.70$ even at $n=10{,}000$, whereas the neural learners reach $0.84$--$0.96$. The core issue is structural: the mediator CATE is defined via smooth functions of dense linear projections ($s_1,s_2$) of the covariates, and an axis-aligned splitting scheme cannot capture these projections. This also provides a mechanistic explanation for how a highly flexible tree ensemble can fail to beat a linear model when the underlying structure is governed by latent
 linear relationships.

Second, the shared representation contributes beyond the neural function class. TNet uses the same neural architecture as TARNet but fits each treatment arm separately. Although it learns the latent projections, because each arm only sees the units in its own treatment group it approaches the shared TARNet only asymptotically. At $n=1{,}000$ it lags substantially ($\Corr\approx0.46$ vs $0.76$ in Scenario~A), but the gap closes by $n=10{,}000$ ($0.88$ vs $0.94$).

\subsection{Limitations and Future Research}
\label{sec:limitations}

While the UNIT framework provides a robust integration of deep representation learning with G-estimation, it is important to acknowledge some limitations and to highlight promising directions for future research.

\paragraph{Estimation of the Nuisance Baseline} 
In our current implementation, the baseline outcome model $g(X)$ is estimated via 5-fold cross-fitted Ridge regression within Stage 2, as prescribed in Section~\ref{subsec:algoverview}. For the linear specifications of $g$ employed in our simulations, the numerical discrepancy between cross-fitted and in-sample estimates was negligible. Consequently, the DML correction \cite{Chernozhukov2018} provided primarily theoretical asymptotic safeguards rather than substantial finite-sample gains in these specific settings. Future research should evaluate the performance of UNIT in contexts where $g(X)$ is highly non-linear,  a detailed discussion of the complications that arise under nonlinear~$g$ is given in Section~\ref{subsec:omega-role}.

\paragraph{Extension to Observational Data} 
The methodology developed in this work is specifically designed for contexts in which the treatment assignment mechanism is randomized (for example, randomized clinical trials or A/B testing environments). Extending this framework to observational study designs would necessitate the simultaneous estimation of the propensity score \(e(X)\) and its explicit incorporation into the construction of the weight function \(\hat{A}_i\).

\paragraph{Empirical Benchmarking}

In contrast to standard CATE estimation, where performance can be assessed using widely adopted benchmarks such as IHDP \citep{Hill2011} or ACIC \citep{HahnDorieMurray2019}, to the best of our knowledge, no public benchmark datasets are currently available for the mediation setting considered here. 
Developing and analyzing experimental datasets to further evaluate the empirical performance of UNIT and competing methods, therefore, constitutes an important direction for future research.

\subsection{Conclusion}
\label{sec:conclusion}

We introduced UNIT, a two-stage estimator that pairs a TARNet-learned mediator CATE with the \citet{ZhengZhou2015} G-estimation of a structural mean model under no essential heterogeneity. Identification of the structural parameters is inherited from the NEH framework and holds regardless of the Stage-1 learner; our contribution is to show, both theoretically and in simulation, that a more accurate first-stage representation yields a more informative plug-in weight and thereby a more efficient structural estimator, in high-dimensional and non linear settings. Across nonlinear mediator-effect surfaces, the shared-representation learner reduces the Stage-2 standard error of the mediation coefficient by roughly one third relative to a linear T-learner, while maintaining low bias and close to nomimal coverage rates as long as NEH assumption holds. For applied mediation analyses with randomised treatment and complex covariate--mediator relationships, UNIT offers a substantive efficiency gain compared to other learners (eg. Ridge, RF or TNet) within a semiparametrically principled framework.

\section*{Data and Code Availability}

All code and frozen simulation data needed to reproduce every table and figure in this paper are available at \url{https://github.com/PsychometricsMZ/UNIT}. Each cell aggregates $R=200$ Monte-Carlo seeds; the seed scheme and all hyperparameters are documented in the Appendix. The software stack is Python (NumPy, scikit-learn, and JAX/CATENets for the TARNet learner).

\section*{Funding}
No funding was received for conducting this study.

\section*{Competing Interests}
The authors declare that they have no competing interests.

\bibliographystyle{apalike}
\bibliography{references}

\appendix

\section{Detailed Derivation: Representation Gap}
\label{app:rep-gap}

We derive the asymptotic variance formula and we show how Theorem~2 of \citet{ZhengZhou2015} specializes to the binary-treatment structural mean model used in our paper, and how the weight quality enters the variance expression.

\subsection{The Zheng--Zhou joint estimating system}

Following \citet[Section~4.5]{ZhengZhou2015}, define the stacked estimating function
\[
G_i(\theta, \beta) \;=\; \begin{pmatrix} G_{1i}(\theta,\beta) \\ G_{2i}(\theta,\beta) \end{pmatrix},
\]
where the $\theta$-block uses the centred weight $\tilde A(T_i,X_i) = A(T_i,X_i) - \E[A(T_i,X_i)\mid X_i]$:
\begin{equation}\label{eq:G1-def}
G_{1i}(\theta,\beta) \;=\; \tilde A(T_i,X_i)^\top\!\left\{Y_i - \sum_{k=1}^K \theta_k\,h_k(T_i,M_i,X_i) - \tilde g(X_i,\beta)\right\},
\end{equation}
and the $\beta$-block uses a separate instrument $B(X_i)$:
\begin{equation}\label{eq:G2-def}
G_{2i}(\theta,\beta) \;=\; B(X_i)^\top\!\left\{Y_i - \sum_{k=1}^K \theta_k\,h_k(T_i,M_i,X_i) - \tilde g(X_i,\beta)\right\}.
\end{equation}

\subsection{Asymptotic normality}

\begin{proposition}[Following Theorem~2 of \citet{ZhengZhou2015}]\label{prop:zz-thm2}
Let $\theta^{(0)}$ denote the true value of~$\theta$ and let $\beta^{(0)}$ be the unique solution of $\E[G_{2i}(\theta^{(0)},\beta)] = 0$.  Under the NEH assumption and standard regularity conditions, the joint estimator $(\hat\theta_n, \hat\beta_n)$ satisfies
\begin{equation}\label{eq:zz-joint-normality}
\sqrt{n}\begin{pmatrix}\hat\theta_n - \theta^{(0)} \\ \hat\beta_n - \beta^{(0)}\end{pmatrix}
\;\xrightarrow{d}\;
\mathcal{N}\!\left(0,\; \mathcal{A}^{-1}\,\mathcal{B}\,\mathcal{A}^{-\top}\right),
\end{equation}
where $\mathcal{A}$ is the Jacobian
\begin{equation}\label{eq:calA}
\mathcal{A} \;=\; \E\!\left[\frac{\partial G_i(\theta^{(0)},\beta^{(0)})}{\partial(\theta^\top, \beta^\top)}\right]
\;=\;
\begin{pmatrix}
\E\!\left[\dfrac{\partial G_{1i}}{\partial\theta}\right] & \E\!\left[\dfrac{\partial G_{1i}}{\partial\beta}\right] \\[10pt]
\E\!\left[\dfrac{\partial G_{2i}}{\partial\theta}\right] & \E\!\left[\dfrac{\partial G_{2i}}{\partial\beta}\right]
\end{pmatrix},
\end{equation}
and $\mathcal{B}$ is the variance of the score:
\begin{equation}\label{eq:calB}
\mathcal{B} \;=\; \E\!\big[G_i(\theta^{(0)},\beta^{(0)})\,G_i(\theta^{(0)},\beta^{(0)})^\top\big].
\end{equation}
The asymptotic covariance of~$\hat\theta_n$ is the $\theta$-block of $\mathcal{A}^{-1}\mathcal{B}\,\mathcal{A}^{-\top}$.
\end{proposition}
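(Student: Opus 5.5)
The plan is to treat $(\hat\theta_n,\hat\beta_n)$ as a just-identified Z-estimator solving $n^{-1}\sum_i G_i(\theta,\beta)=0$ and to apply the standard M/Z-estimation asymptotics: a mean-value (Taylor) expansion around $(\theta^{(0)},\beta^{(0)})$, a law of large numbers for the Jacobian, and a central limit theorem for the score. The argument proceeds in the following order.

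The first step is to verify that $(\theta^{(0)},\beta^{(0)})$ is a zero of the population estimating function, $\E[G_i(\theta^{(0)},\beta^{(0)})]=0$. For the $\beta$-block this holds by the definition of $\beta^{(0)}$. For the $\theta$-block, note that at $\theta^{(0)}$ the residual is $Y_i-\sum_k\theta^{(0)}_k h_k-\tilde g(X_i,\beta^{(0)}) = \tilde Y_i(\theta^{(0)})-\tilde g(X_i,\beta^{(0)})$. Conditioning on $(T_i,X_i)$ and using NEH (Assumption~\ref{ass:neh}) together with the consistency and structural-specification assumptions (Assumptions~\ref{ass:sutva} and~\ref{ass:model-spec}) gives $\E[\tilde Y(\theta^{(0)})\mid T,X]=\E[\tilde Y(\theta^{(0)})\mid X]$, which is a function of $X$ alone. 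Since $\E[\tilde A\mid X]=0$ by construction, it follows that
\[
\E\bigl[\tilde A^\top\{f(X_i)\}\bigr]=0
\]
for every function $f$ of the covariates. This is the centering key, and it is also why misspecification of $\tilde g$ does no harm.

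The second step is identification and consistency. I would assume, as part of the "standard regularity conditions", that $(\theta^{(0)},\beta^{(0)})$ is the unique root of the population equation and that $\mathcal A$ is nonsingular. For the $\theta$-block, nonsingularity reduces to invertibility of $-\E[\tilde A H^\top]$, which is Assumption~\ref{ass:nondegen} specialised to $\tilde A=(T-p)W(X)$. I would also assume compactness of the parameter space and a dominating envelope for $G_i$. Because $G$ is linear in $(\theta,\beta)$ when $\tilde g$ is linear in $\beta$, uniform convergence of $n^{-1}\sum_i G_i$ is immediate, and consistency then follows from the standard Z-estimator consistency theorem.

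The third step is the expansion. Write
\[
0=n^{-1/2}\sum_i G_i(\theta^{(0)},\beta^{(0)})+\Bigl[n^{-1}\sum_i \partial G_i(\bar\theta,\bar\beta)\Bigr]\sqrt n\,(\hat\theta_n-\theta^{(0)},\,\hat\beta_n-\beta^{(0)}),
\]
where $(\bar\theta,\bar\beta)$ lies between the estimator and the truth. By the uniform law of large numbers and consistency, the bracketed matrix converges in probability to $\mathcal A$; in the linear case it does not depend on the parameter at all. The i.i.d. terms $G_i(\theta^{(0)},\beta^{(0)})$ have mean zero by the first step and finite variance $\mathcal B$ by a moment condition, so the multivariate CLT gives
\[
n^{-1/2}\sum_i G_i(\theta^{(0)},\beta^{(0)}) \xrightarrow{d} \mathcal N(0,\mathcal B).
\]
Slutsky's lemma then delivers the stated limit with covariance $\mathcal A^{-1}\mathcal B\mathcal A^{-\top}$, and reading off the $\theta$-block gives the last claim.

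The main obstacle is the first step, specifically showing that the $\theta$-score is mean zero when $\tilde g$ may be misspecified and $\beta^{(0)}$ is only a pseudo-true value. I would handle this through the centering key, which removes any function of $X$, including $g(X)-\tilde g(X,\beta^{(0)})$. What remains is $\E[\tilde A\,\varepsilon]$, and by NEH $\E[\varepsilon\mid T,X]=\E[F(M,X,T)\mid T,X]$ does not depend on $T$. I would check this last point carefully: it requires $\E[\varepsilon(T,M)\mid T,X]=\E[\varepsilon\mid X]=0$, which uses randomisation (Assumption~\ref{ass:ignorability}) together with the regime-invariance of $F$.
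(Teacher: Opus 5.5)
Your proposal is correct and follows essentially the same route as the paper. Both verify the mean-zero score at $(\theta^{(0)},\beta^{(0)})$ via NEH, randomisation and the centering property, then use a mean-value expansion of the stacked system, a uniform LLN for the Jacobian, the multivariate CLT for $n^{-1/2}\sum_i G_i(\theta^{(0)},\beta^{(0)})$, and Slutsky.

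The only difference is in the consistency step. The paper uses $\E[\partial G_{1i}/\partial\beta]=0$ to obtain $\hat\theta_n\xrightarrow{p}\theta^{(0)}$ from nonsingularity of $\E[\tilde A^\top H]$ alone, and then $\hat\beta_n\xrightarrow{p}\beta^{(0)}$ from uniqueness of $\beta^{(0)}$. You instead assume a unique joint root and invoke the generic Z-estimator theorem; that joint uniqueness already follows from the block-triangular structure, so your extra assumption is harmless but redundant.
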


\begin{proof}
\textbf{Step~1 (Consistency of~$\hat\theta_n$).}
At the true parameters, $\E[G_{1i}(\theta^{(0)},\beta^{(0)})] = 0$ because $\E[\tilde A(T,X)\mid X] = 0$ by construction and $\E[\tilde Y(\theta^{(0)})\mid T,X] = g(X)$.  The sample average of $G_{1i}$ at $(\hat\theta_n,\hat\beta_n)$ equals zero by definition of the estimator, so by the weak law of large numbers:
\[
0 \;=\; \E[G_1(\hat\theta_n,\hat\beta_n)] + o_p(1).
\]
A first-order Taylor expansion of $\E[G_1(\hat\theta_n,\hat\beta_n)]$ around $(\theta^{(0)},\beta^{(0)})$ gives
\begin{equation}\label{eq:taylor-G1}
0 \;=\; \E[G_1(\theta^{(0)},\beta^{(0)})] \;-\; \E[\tilde A^\top H]\,(\hat\theta_n - \theta^{(0)}) \;+\; \E\!\left[\frac{\partial G_1}{\partial\beta}\right](\hat\beta_n - \beta^{(0)}) \;+\; o_p(1).
\end{equation}

Using the centering property: for any $\theta$ and $\beta$,
\begin{align}
\E\!\left[\frac{\partial G_{1i}}{\partial\beta}\right]
&= -\E\!\left[\tilde A(T_i,X_i)^\top\,\frac{\partial\tilde g(X_i,\beta)}{\partial\beta}\right] \nonumber \\
&= -\E\!\left[\E\!\big[\tilde A(T_i,X_i)^\top\mid X_i\big]\,\frac{\partial\tilde g(X_i,\beta)}{\partial\beta}\right] \;=\; 0, \label{eq:centering-key}
\end{align}
because $\E[\tilde A(T,X)\mid X] = 0$.  This is the working-model property of G-estimation: consistency of~$\hat\theta$ does not depend on correct specification of~$\tilde g$.

Since $\E[G_1(\theta^{(0)},\beta^{(0)})] = 0$ and Eq.~\eqref{eq:centering-key} eliminates the $\hat\beta_n$ term, we obtain
\[
0 \;=\; -\E[\tilde A^\top H]\,(\hat\theta_n - \theta^{(0)}) + o_p(1).
\]
When $\E[\tilde A^\top H]$ is nonsingular (Assumption~\ref{ass:nondegen}), this yields $\hat\theta_n \xrightarrow{p} \theta^{(0)}$.

\medskip
\textbf{Step~2 (Consistency of~$\hat\beta_n$).}
Since $\hat\theta_n \xrightarrow{p} \theta^{(0)}$ and $n^{-1}\sum_i G_{2i}(\hat\theta_n,\hat\beta_n) = 0$, the law of large numbers gives $\E[G_2(\theta^{(0)},\hat\beta_n)] = o_p(1)$.  The uniqueness of~$\beta^{(0)}$ as the solution to $\E[G_2(\theta^{(0)},\beta)] = 0$ implies $\hat\beta_n \xrightarrow{p} \beta^{(0)}$.

\medskip
\textbf{Step~3 (Joint asymptotic normality).}
Both estimators are now consistent.  Applying a first-order Taylor expansion to the full stacked system:
\[
0 \;=\; n^{-1}\sum_{i=1}^n G_i(\hat\theta_n, \hat\beta_n)
\;=\; n^{-1}\sum_{i=1}^n G_i(\theta^{(0)}, \beta^{(0)})
\;+\; \bar\nabla\,\begin{pmatrix}\hat\theta_n - \theta^{(0)} \\ \hat\beta_n - \beta^{(0)}\end{pmatrix},
\]
where $\bar\nabla$ is the sample Jacobian evaluated at an intermediate point $(\theta^*_n, \beta^*_n)$ between the estimates and the truth.  By the uniform law of large numbers, $\bar\nabla \xrightarrow{p} \mathcal{A}$.  Rearranging:
\begin{equation}\label{eq:joint-expansion}
\sqrt{n}\begin{pmatrix}\hat\theta_n - \theta^{(0)} \\ \hat\beta_n - \beta^{(0)}\end{pmatrix}
\;=\;
-\mathcal{A}^{-1}\;\frac{1}{\sqrt{n}}\sum_{i=1}^n G_i(\theta^{(0)},\beta^{(0)}) \;+\; o_p(1).
\end{equation}
The sum $n^{-1/2}\sum_i G_i(\theta^{(0)},\beta^{(0)})$ is a standardised average of iid mean-zero random vectors.  Under the finite-variance condition (our Assumption~\ref{ass:regularity}(iii)), the multivariate CLT gives
\[
\frac{1}{\sqrt{n}}\sum_{i=1}^n G_i(\theta^{(0)},\beta^{(0)}) \;\xrightarrow{d}\; \mathcal{N}(0, \mathcal{B}),
\]
and the result in Eq.~\eqref{eq:zz-joint-normality} follows by Slutsky's theorem.
\end{proof}

\subsection{Binary treatment and $h_1=T,h_2=M$}

In our model, $h_1(T,M,X) = T$, $h_2(T,M,X) = M$, so $H_i = (T_i, M_i)^\top$.  The weight from Eq.~\eqref{eq:optimal-weight} is $A(T_i,X_i) = (T_i - p)\,W(X_i)$ with $W(X_i) = (1,\;\tau_M(X_i))^\top$.  Since $\E[T_i - p \mid X_i] = 0$, the centred weight $\tilde A_i = A_i - \E[A_i\mid X_i] = A_i$.

\paragraph{Computing the Jacobian $\theta$-block.}
\begin{align}
\frac{\partial G_{1i}}{\partial\theta} &= -\tilde A_i^\top\,H_i^\top \;=\; -(T_i - p)\,W(X_i)\,H_i^\top \nonumber\\
&= -(T_i - p)\begin{pmatrix}1 \\ \tau_M(X_i)\end{pmatrix}(T_i,\; M_i). \label{eq:dG1-dtheta}
\end{align}
Taking expectations and using $\E[(T_i - p)\mid X_i] = 0$:
\begin{align}
\E\!\left[\frac{\partial G_{1i}}{\partial\theta}\right]
&= -\E\!\big[(T_i - p)\,W(X_i)\,H_i^\top\big] \nonumber\\
&= -\E\!\left[(T_i - p)\begin{pmatrix} T_i & M_i \\ \tau_M(X_i)\,T_i & \tau_M(X_i)\,M_i \end{pmatrix}\right], \label{eq:bread-intermediate}
\end{align}
Since $H_i = (T_i, M_i)^\top$ and $M_i = \tau(X_i)T_i + \mu_0(X_i) + K_i + \varepsilon_{M,i}$, we compute column by column.

\emph{Column~1 (derivative w.r.t.\ $\theta_1$):}  The $(j,1)$ entry of $G_\theta$ is $\E[(T_i-p)\,W_j(X_i)\,T_i]$.  For $j=1$: $\E[(T_i-p)\,T_i] = \Var(T) = p(1-p)$.  For $j=2$: $\E[(T_i-p)\,\tau_M(X_i)\,T_i] = p(1-p)\,\E[\tau_M(X_i)]$ (since $T\indep X$).

\emph{Column~2 (derivative w.r.t.\ $\theta_2$):}  The $(j,2)$ entry is $\E[(T_i-p)\,W_j(X_i)\,M_i]$.  Substituting the mediator model and using $T\indep X$:
\begin{align}
\E[(T_i - p)\,W_j(X_i)\,M_i]
&= \E\big[(T_i - p)\,W_j(X_i)\big\{\tau(X_i)\,T_i + \mu_0(X_i) + K_i + \varepsilon_{M,i}\big\}\big] \nonumber\\
&= \E\big[(T_i - p)^2\big]\,\E[W_j(X_i)\,\tau(X_i)] + 0, \label{eq:col2-computation}
\end{align}
where the terms involving $\mu_0, K, \varepsilon_M$ vanish because $\E[(T_i-p)\mid X_i] = 0$ and these terms do not depend on~$T_i$ beyond $X_i$.  Specifically:
\begin{align*}
\E[(T_i - p)\,W_j(X_i)\,\mu_0(X_i)] &= \E\!\big[\E[(T_i-p)\mid X_i]\,W_j(X_i)\,\mu_0(X_i)\big] = 0, \\
\E[(T_i - p)\,W_j(X_i)\,K_i] &= \E[(T_i-p)\,W_j(X_i)]\,\E[K_i] = 0 \quad(K\indep(X,T)).
\end{align*}

Hence, the ``bread'' matrix is
\begin{equation}\label{eq:Gtheta-explicit}
G_\theta \;=\; -p(1-p)\begin{pmatrix}
1 & \E[\tau_M(X)] \\
\E[\tau_M(X)] & \E[\tau_M(X)^2]
\end{pmatrix}
\;=\; -p(1-p)\,\E[W(X)\,W(X)^\top].
\end{equation}
This can be written compactly as
\begin{equation}\label{eq:Gtheta-compact}
G_\theta \;=\; -\E\!\big[(T-p)^2\,W(X)\,W(X)^\top\big],
\end{equation}
using $\Var(T) = p(1-p)$ and $T\indep X$.

\paragraph{Computing the score variance.}
At the true parameters, the score is
\[
G_{1i}(\theta^{(0)},\beta^{(0)}) \;=\; \tilde A_i^\top\,\zeta_{Y,i} \;=\; (T_i - p)\,W(X_i)\,\zeta_{Y,i},
\]
where $\zeta_{Y,i} = Y_i - g_0(X_i) - H_i^\top\theta^{(0)}$ is the structural residual.  Let $\sigma^2_\zeta(X) = \Var(\zeta_{Y,i}\mid X_i)$ be the conditional residual variance.  The ``meat'' matrix is
\begin{equation}\label{eq:meat-explicit}
S \;=\; \E\!\big[(T_i - p)^2\,W(X_i)\,W(X_i)^\top\,\zeta_{Y,i}^2\big].
\end{equation}
Using $T\indep (X, \zeta_Y)$ (which holds because $T$ is randomised, so $T\indep X$, and $\zeta_Y = \lambda_K K + \varepsilon_{Y}$ is a function of unobservables independent of~$T$):
\begin{equation}
S \;=\; p(1-p)\,\E\!\big[W(X)\,W(X)^\top\,\zeta_{Y,i}^2\big].
\end{equation}

\subsection{The variance proportionality and the role of weight quality}

Combining the bread from Eq.~\eqref{eq:Gtheta-compact} and meat from Eq.~\eqref{eq:meat-explicit} into the sandwich, and noting $G_\theta^{-1} = -[p(1-p)]^{-1}\,\E[WW^\top]^{-1}$:
\begin{align}
\Sigma_\theta
&= G_\theta^{-1}\,S\,G_\theta^{-\top} \nonumber\\
&= \frac{1}{[p(1-p)]^2}\;\E[WW^\top]^{-1}\;p(1-p)\,\E[WW^\top\zeta_Y^2]\;\E[WW^\top]^{-1} \nonumber\\
&= \frac{1}{p(1-p)}\;\E[WW^\top]^{-1}\;\E[WW^\top\zeta_Y^2]\;\E[WW^\top]^{-1}. \label{eq:full-sandwich}
\end{align}

When the conditional residual variance is approximately constant, $\zeta_{Y,i}^2 \approx \sigma^2_\zeta$, the meat simplifies to $S \approx p(1-p)\,\sigma^2_\zeta\,\E[WW^\top]$, and the sandwich reduces to
\begin{equation}\label{eq:homoskedastic-sandwich}
\Sigma_\theta \;\approx\; \frac{\sigma^2_\zeta}{p(1-p)}\;\left(\E[W(X)\,W(X)^\top]\right)^{-1}.
\end{equation}
Because $T \indep X$, we have $\E[W(X)\,W(X)^\top\,(T-p)^2] = p(1-p)\,\E[W(X)\,W(X)^\top]$, so Eq.~\eqref{eq:homoskedastic-sandwich} coincides (for the oracle weight $W=W^*$) with the proportionality stated in the main text:
\begin{equation}\label{eq:avar-proportionality}
\Avar(\hat\theta) \;\propto\; \left(\E\!\left[W(X)\,W(X)^\top\,(T-p)^2\right]\right)^{-1}.
\end{equation}

\subsection{Attenuation mechanism}

The key $2\times 2$ matrix $\E[WW^\top]$ has the structure
\begin{equation}\label{eq:WW-matrix}
\E[W(X)\,W(X)^\top] \;=\; \begin{pmatrix}
1 & \E[\tau_M(X)] \\
\E[\tau_M(X)] & \E[\tau_M(X)^2]
\end{pmatrix}.
\end{equation}
Its determinant is $\E[\tau_M^2] - (\E[\tau_M])^2 = \Var(\tau_M(X))$, which is strictly positive by Assumption~\ref{ass:nondegen}.  Inverting the $2\times 2$ system, the $(2,2)$ entry of $\E[WW^\top]^{-1}$ (governing $\Var(\hat\theta_2)$) is
\begin{equation}\label{eq:theta2-var}
\big[\E[WW^\top]^{-1}\big]_{22} \;=\; \frac{1}{\Var(\tau_M(X))}.
\end{equation}
Under the homoskedastic approximation in Eq.~\eqref{eq:homoskedastic-sandwich}, the oracle-weight ($W^*=(1,\tau_M)^\top$) asymptotic variance is
\begin{equation}\label{eq:theta2-avar}
\Avar(\hat\theta_2) \;\approx\; \frac{\sigma^2_\zeta}{p(1-p)\,\Var(\tau_M(X))}.
\end{equation}

In practice Stage~1 supplies a plug-in weight $\hat W(X) = (1,\;\hat\tau_M(X))^\top$ in place of $W^*$.  Because the basis $H = (T, M)^\top$ still contains the true mediator, the sandwich bread is the asymmetric cross-moment $\hat G_\theta = -p(1-p)\,\E[\hat W\, W^{*\top}]$ rather than $\E[\hat W\hat W^\top]$, while the meat remains $p(1-p)\,\sigma^2_\zeta\,\E[\hat W\hat W^\top]$.  Carrying both through the just-identified sandwich $\hat G_\theta^{-1}\hat S\,\hat G_\theta^{-\top}$ and extracting the $\theta_2$ block gives
\begin{equation}\label{eq:theta2-avar-feasible}
\Avar(\hat\theta_2) \;\approx\; \frac{\sigma^2_\zeta}{p(1-p)\,\Var(\tau_M(X))}\;\cdot\;\frac{1}{\Corr(\hat\tau_M,\tau_M)^2},
\end{equation}
so the efficiency loss relative to the oracle in Eq.~\eqref{eq:theta2-avar} is the factor $1/\Corr(\hat\tau_M,\tau_M)^2$.  Two consequences follow.  First, a pure rescaling $\hat\tau_M = \alpha\,\tau_M$ ($\alpha \neq 0$) leaves $\Corr(\hat\tau_M,\tau_M) = 1$ and therefore does not change $\Avar(\hat\theta_2)$: the scale of $\hat\tau_M$ cancels between bread and meat, and only its alignment with $\tau_M$ is relevant.  Second, in the degenerate limit $\hat\tau_M(x) \equiv c$ (so $\Corr(\hat\tau_M,\tau_M) \to 0$), the bread $\E[\hat W\, W^{*\top}]$ becomes singular and $\theta_2$ is not identified.

This is the formal basis for the representation gap: the efficiency loss for~$\hat\theta_2$ is governed by how well the learned representation~$\Phi$ aligns $\hat\tau_M$ with $\tau_M$, measured by $\Corr(\hat\tau_M,\tau_M)^2$ (equivalently, the gap $1 - \Corr(\hat\tau_M,\tau_M)^2$), not by the variance ratio $\Var(\tau_M)/\Var(\hat\tau_M)$.  This is precisely the quantity reported empirically in Section~\ref{sec:part2-results}.

\section{Inference and Asymptotic Properties}
\label{app:inference}

We provide a complete proof of the asymptotic normality of the UNIT estimator, extending the Zheng--Zhou Theorem~2 framework (Appendix~\ref{app:rep-gap}) to accommodate the two-stage structure with cross-fitted first-stage estimates.

\subsection{Notation and setup}

Denote the joint instrument--covariate vector by $C_i = (\hat A_i^\top, B_i^\top)^\top$, where $\hat A_i = (T_i - p)\hat W(X_i)$ is the centred weight and $B(X_i)$ is the instrument for the baseline parameters (e.g., $B_i = X_i$ for linear~$g$).  Let $\xi_i = (H_i^\top, X_i^\top)^\top$ collect all predictors of the outcome, so that the joint estimating equation is $\sum_i C_i(Y_i - \xi_i^\top\gamma) = 0$ with $\gamma = (\theta^\top, \beta^\top)^\top$.

\subsection{Regularity conditions}

\begin{assumption}[Regularity conditions]\label{ass:regularity}
\mbox{}
\begin{enumerate}
    \item[(i)] (IID sampling) The observations $(X_i, T_i, M_i, Y_i)$ are iid draws from a common distribution~$P$.
    \item[(ii)] (Moment condition) The ``bread'' matrix $G = \E[C_i\,\xi_i^\top]$ is nonsingular (cf.\ Assumption~6 in \citet{ZhengZhou2015}).
    \item[(iii)] (Finite variance) $\E[\|C_i\|^2\,\zeta_{Y,i}^2] < \infty$, where $\zeta_{Y,i} = Y_i - g(X_i) - H_i^\top\theta_0$ is the structural residual.
    \item[(iv)] (First-stage rate) The cross-fitted mediator CATE satisfies $\|\hat\tau_M^{(-)} - \tau_M\|_{L_2(P)} = o_p(n^{-1/4})$.
    \item[(v)] (Baseline consistency) The cross-fitted baseline estimator satisfies
    \[
    \sup_{\|\theta-\theta_0\|\le\delta_n} \|\hat g_\theta^{(-)} - g_{0,\theta}\|_{L_2(P)} = o_p(1)
    \]
    for some sequence $\delta_n\downarrow 0$, where $g_{0,\theta}(x) = \E[Y - H^\top\theta\mid X=x]$.
\end{enumerate}
\end{assumption}

\paragraph{Discussion of individual conditions.}

Condition~(i) is standard.  Condition~(ii) requires that the joint instrument vector is sufficiently ``diverse'' to identify all parameters; for the $\theta$-block this reduces to Assumption~\ref{ass:nondegen} ($\Var(\tau_M(X)) > 0$), and for the $\beta$-block it requires that $B(X)$ spans the baseline parameter space.  Condition~(iii) is a moment condition ensuring finite sandwich variance.

Condition~(iv) is the rate requirement for the first stage.  It states that the CATE estimation error must vanish faster than $n^{-1/4}$, which is the threshold for the cross-product remainder in the Neyman-orthogonal score to be $o_p(n^{-1/2})$.  This is a standard DML rate condition \citep{Chernozhukov2018}.

Condition~(v) requires uniform consistency of the cross-fitted baseline in a shrinking neighbourhood of $\theta_0$.  For a linear Ridge estimator with fixed-dimensional $b(X)$, this is satisfied under standard conditions (bounded eigenvalues of $\E[b(X)b(X)^\top]$ and shrinkage $\lambda_n \to 0$).

\subsection{The role of $\Omega^{-1}$ and the linear-$g$ simplification}
\label{subsec:omega-role}

Theorem~3 of \citet{ZhengZhou2015} shows that the fully efficient weight includes the factor $\Omega^{-1}(X) = \{\Var(\varepsilon^{tm}(U,X)\mid X)\}^{-1}$.  The semiparametrically optimal instrument is
\begin{equation}\label{eq:optimal-instrument-general}
A^*(T,X) \;=\; \big\{\E[\partial\tilde Y(\theta)/\partial\theta \mid X,T] - \E[\partial\tilde Y(\theta)/\partial\theta \mid X]\big\}\;\Omega^{-1}(X),
\end{equation}
where the first factor captures the heterogeneity in the structural derivative across treatment arms, and $\Omega^{-1}$ re-weights observations inversely to their residual variance.

In the present paper we adopt a linear specification $g(X) = \beta^\top b(X)$, under which, as noted by \cite{Morelli2025}, the estimation simplifies to a single step and $\Omega^{-1}$ need not be estimated.  The plug-in weight reduces to $\hat A_i = (T_i - p)\hat W(X_i)$ with $\hat W$ fixed from Stage~1.  Below we make the linear-$g$ simplification precise and then discuss, for completeness, how the asymptotic argument would change if~$g$ were modelled nonlinearly.

\paragraph{Why $\Omega^{-1}$ drops out under linear~$g$.}
Under our model with $h_1 = T$, $h_2 = M$ and a linear working model $g(X) = \beta^\top b(X)$, the joint system in Eq.~\eqref{eq:stacked-moments} is linear in $\gamma = (\theta^\top, \beta^\top)^\top$.  The $G_2$-equation $\sum_i B_i(Y_i - H_i^\top\theta - b(X_i)^\top\beta) = 0$ has the closed-form solution
\[
\hat\beta(\theta) \;=\; \Bigl(\sum_i B_i\,b(X_i)^\top\Bigr)^{-1}\sum_i B_i\,(Y_i - H_i^\top\theta),
\]
so $\hat\beta$ is a deterministic affine function of~$\theta$.  The converged iterate $(\hat\theta, \hat\beta)$ is therefore the unique $M$-estimator of the joint linear system, and the full sandwich variance of the $\theta$-block (Proposition~\ref{prop:normality}) is valid without estimating~$\Omega^{-1}$.  Omitting $\Omega^{-1}$ from the weight means that the estimator is not semiparametrically efficient in general, it uses the instrument $(T-p)\,W(X)$ rather than $(T-p)\,W(X)\,\Omega^{-1}(X)$,but it remains consistent and asymptotically normal.  

\paragraph{What changes under nonlinear~$g$.}
If the baseline were modelled nonlinearly, $g(X) = g(X;\beta)$ with $g$ a neural network or other flexible learner, we recognize that at least three aspects of the argument would be affected.  The identification of~$\theta$ and the centering property $\E[\partial G_1/\partial\beta] = 0$ would be unaffected, since they depend only on $\E[\tilde A_i\mid X_i] = 0$ and on $g(X;\beta)$ being a function of~$X_i$ alone.  However:

\begin{enumerate}
\item \emph{Loss of linearity in the joint system.}  With nonlinear~$g$, the $G_2$-equation
\[
\sum_i B_i\big(Y_i - H_i^\top\theta - g(X_i;\beta)\big) = 0
\]
is nonlinear in~$\beta$.  The $\beta$-update no longer has a closed form: it requires an inner optimisation loop (e.g., gradient descent), and convergence of the alternating scheme to a fixed point requires additional regularity (e.g., a contraction condition on the map $\theta \mapsto \hat\beta(\theta) \mapsto \hat\theta(\hat\beta)$).

\item \emph{$\Omega^{-1}$ must be estimated.}  The semiparametrically efficient weight from Eq.~\eqref{eq:optimal-instrument-general} includes $\Omega^{-1}(X)$, which must be estimated from the squared residuals $\{Y_i - \hat g(X_i;\hat\beta) - H_i^\top\hat\theta\}^2$.  Since these residuals depend on the current~$\hat\theta$, an outer iteration between $\hat\theta$ and $\hat\Omega$ is needed.  In practice, \citet{ZhengZhou2015} recommend a single one-step update rather than iterating to convergence, noting that a misspecified $\Omega$ does not affect consistency of~$\hat\theta$ but only its efficiency.

\item \emph{Uniqueness of~$\beta^{(0)}$ becomes non-trivial.}
Proposition 1 assumes that there exists a unique solution $\beta^{(0)}$ to the population baseline equation $\E[G_{2i}(\theta^{(0)},\beta)] = 0$.  As \citet{ZhengZhou2015} remark explicitly: ``When we use a linear working model, this assumption always holds.  However, when we use a non-linear model, it is difficult to check whether this assumption holds.''  The reason is as follows.  With linear~$g$, the equation $\E[B_i\,(Y_i - H_i^\top\theta_0 - b(X_i)^\top\beta)] = 0$ is affine in~$\beta$, and uniqueness reduces to the standard rank condition $\E[B_i\,b(X_i)^\top]$ nonsingular.  With nonlinear $g(X;\beta)$, the equation
\[
\E\!\big[B_i\,\big(Y_i - H_i^\top\theta_0 - g(X_i;\beta)\big)\big] = 0
\]
defines a nonlinear system in~$\beta$ that may have multiple roots, no root, or a root that is a saddle point of the population objective.  \citet{ZhengZhou2015} suggest a partial diagnostic: if the regression model $\E[Y\mid X] = g(X;\beta)$ is not identified (i.e., two distinct parameter values yield the same conditional mean), then the unique-solution assumption fails.  The practical implication is that choosing a nonlinear~$g$ requires verifying that the parametric model $g(X;\beta)$ is globally identified. In case the parametric model is not globally identified, the alternating updates of $g(X;\beta)$ and $\theta$ are not guaranteed to converge.

\end{enumerate}

In summary, the identification and centering arguments are entirely unaffected by the functional form of~$g$: what changes is the estimation procedure (closed-form versus iterative), the necessity of estimating~$\Omega^{-1}$, and the structural condition of uniqueness of~$\beta^{(0)}$ .  Our current linear-$g$ specification avoids all these complications, which is why we have adopted it as the baseline implementation.  Extending UNIT to nonlinear~$g$ with formal inferential guarantees is a concrete direction for future work, requiring the addition of a third cross-fitted nuisance ($\hat\Omega$) and a global identifiability argument for the working baseline model.

\subsection{Asymptotic Normality}

\begin{proposition}[Asymptotic normality]\label{prop:normality}
Under Assumptions~\ref{ass:sutva}--\ref{ass:nondegen} and the regularity conditions (Assumption~\ref{ass:regularity}), the UNIT G-estimator $\hat\theta$ satisfies
\[
\sqrt{n}\,(\hat\theta - \theta_0) \;\xrightarrow{d}\; \mathcal{N}\!\left(0,\; G_\theta^{-1}\,S\,G_\theta^{-\top}\right),
\]
where $G_\theta$ is the $\theta$-block of the Jacobian
\begin{equation}\label{eq:bread}
G_\theta \;=\; \E\!\left[\hat A_i\,H_i^\top\right],
\end{equation}
and $S$ is the variance of the influence function evaluated at the true parameters,
\begin{equation}\label{eq:meat}
S \;=\; \Var\!\left[\hat A_i\,\zeta_{Y,i}\right] \;=\; \E\!\left[\hat A_i\,\hat A_i^\top\,\zeta_{Y,i}^2\right], \qquad \zeta_{Y,i} = Y_i - g_0(X_i) - H_i^\top\theta_0,
\end{equation}
where the second equality uses $\E[\hat A_i\,\zeta_{Y,i}]=0$.
\end{proposition}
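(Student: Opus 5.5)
We will establish the result by comparing the feasible estimator with an oracle that uses the limiting weight, and showing that the difference is $o_p(n^{-1/2})$. Write $\bar W = (1,\bar\tau_M)^\top$ for the probability limit of the Stage-1 weight; under Assumption~\ref{ass:regularity}(iv) this is $W^* = (1,\tau_M)^\top$. Put $\bar A_i = (T_i-p)\bar W(X_i)$; this is how $\hat A_i$ in \eqref{eq:bread}--\eqref{eq:meat} is to be read in the limit. Using the closed form \eqref{eq:theta-hat} together with $Y_i = g_0(X_i) + H_i^\top\theta_0 + \zeta_{Y,i}$, we get
\[
\sqrt n(\hat\theta-\theta_0) \;=\; \Bigl(n^{-1}\textstyle\sum_i \hat A_i H_i^\top\Bigr)^{-1}\, n^{-1/2}\textstyle\sum_i \hat A_i\bigl\{\zeta_{Y,i} + g_0(X_i)-\hat g^{(-)}(X_i)\bigr\}.
\]
This is exact, since the system is linear. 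In the general alternating case we first reduce to this form using the affine map $\theta\mapsto\hat\beta(\theta)$ of Appendix~\ref{subsec:omega-role}, so that $(\hat\theta,\hat\beta)$ solves the stacked linear system $\sum_i C_i(Y_i-\xi_i^\top\gamma)=0$. We then read off the $\theta$-block. Because of the centering identity \eqref{eq:centering-key}, the $\beta$-block contributes nothing to the $\theta$-block of the sandwich, exactly as in Proposition~\ref{prop:zz-thm2}.

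\textbf{Step 1 (bread).} We first show $n^{-1}\sum_i\hat A_iH_i^\top \xrightarrow{p} G_\theta=\E[\bar A_iH_i^\top]$. We split this as $n^{-1}\sum_i\bar A_iH_i^\top$ plus $n^{-1}\sum_i(T_i-p)(\hat W-\bar W)(X_i)H_i^\top$. The first term converges by the LLN. The second is bounded by Cauchy--Schwarz by $\|\hat\tau_M^{(-)}-\tau_M\|_{L_2(P)}\cdot O_p(1)$, using finite second moments of $M$; on each fold this bound holds conditionally on the training sample. Nonsingularity of $G_\theta$ follows from Assumption~\ref{ass:nondegen}, via the explicit form \eqref{eq:Gtheta-explicit}.

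\textbf{Step 2 (score decomposition).} We write $n^{-1/2}\sum_i\hat A_i\{\cdot\}=T_1+T_2+T_3+T_4$, where
\[
T_1=n^{-1/2}\textstyle\sum_i\bar A_i\zeta_{Y,i},\quad
T_2=n^{-1/2}\textstyle\sum_i(\hat A_i-\bar A_i)\zeta_{Y,i},
\]
\[
T_3=n^{-1/2}\textstyle\sum_i\bar A_i(g_0-\hat g^{(-)})(X_i),\quad
T_4=n^{-1/2}\textstyle\sum_i(\hat A_i-\bar A_i)(g_0-\hat g^{(-)})(X_i).
\]
$T_1$ is a sum of i.i.d.\ terms. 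Its mean is zero because of randomisation and NEH: $\E[\zeta_Y\mid T,X]=\E[\zeta_Y\mid X]$ and $\E[T-p\mid X]=0$. It therefore converges to $\mathcal N(0,S)$ by the CLT under Assumption~\ref{ass:regularity}(iii), and $S=\E[\bar A\bar A^\top\zeta_Y^2]$.

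For $T_2$ and $T_3$ we use cross-fitting. Fix a fold $k$ and condition on $I_k^c$. Each summand of $T_3$ then has conditional mean zero, because $\E[T_i-p\mid X_i, I_k^c]=0$ and $\hat g^{(-k)}$ is fixed. Its conditional variance is $O(\|\hat g^{(-)}-g_0\|_{L_2}^2)=o_p(1)$ by condition (v), so $T_3=o_p(1)$ by conditional Chebyshev. The same argument applies to $T_2$, with conditional mean $\E[(T-p)(\hat\tau-\tau)(X)\zeta_Y]=0$ and conditional variance $O(\|\hat\tau-\tau\|_{L_2}^2)$.

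$T_4$ is the product remainder. Its first coordinate vanishes identically, and its second coordinate is bounded by $\sqrt n\,\|\hat\tau_M^{(-)}-\tau_M\|\,\|\hat g^{(-)}-g_0\|$ plus a mean-zero term, which we handle as before.

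\textbf{Main obstacle.} We expect the delicate step to be $T_4$ and, relatedly, the dependence of $\hat g^{(-)}$ on $\theta$. Condition (v) only gives $o_p(1)$ consistency of the baseline, so the product rate cannot be used directly. Instead we plan to exploit exact centering once more. Conditional on the training fold, $(T_i-p)$ is independent of $X_i$ and of both fitted nuisances, so the summand $(T_i-p)(\hat\tau-\tau)(X_i)(g_0-\hat g)(X_i)$ has conditional mean zero. Its conditional variance is bounded by $\|\hat\tau-\tau\|_{L_4}^2\|\hat g-g_0\|_{L_4}^2$, or more crudely by $o_p(1)$ under boundedness, so $T_4=o_p(1)$ without needing the $n^{-1/4}$ product.

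Evaluating $\hat g^{(-)}$ at $\hat\theta$ rather than $\theta_0$ is handled by the uniformity in (v) over $\|\theta-\theta_0\|\le\delta_n$, after a preliminary consistency step (Step~1 of Proposition~\ref{prop:zz-thm2}). For the linear Ridge baseline this can be checked directly from the affine dependence on $\theta$. Slutsky's theorem then yields the stated limit $\mathcal N(0,G_\theta^{-1}SG_\theta^{-\top})$.
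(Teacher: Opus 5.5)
Your proposal is correct and follows essentially the same route as the paper. You compare with the oracle weight $A_i^*=(T_i-p)W^*(X_i)$ and remove the first-stage remainder ($T_2$, the paper's $R_\tau$) and the baseline remainder (your $T_3+T_4$, which the paper handles in one step as $R_g$ because it already carries $\hat A_i$) by fold-wise conditioning, the centering $\E[T_i-p\mid X_i]=0$ and conditional Chebyshev, then conclude with the CLT and Slutsky; your explicit bread convergence and your treatment of the dependence of $\hat g^{(-)}$ on $\hat\theta$ through the affine Ridge structure only fill in details the paper leaves implicit.
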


\begin{proof}
We proceed in four steps: (1) identify the UNIT estimator as an $M$-estimator of a stacked moment condition; (2) perform a first-order Taylor expansion; (3) bound the two remainder terms arising from plugging in the estimated first-stage CATE and the estimated baseline; and (4) apply the CLT to the leading term.

\medskip
\textbf{Step~1: $M$-estimation formulation.}

The UNIT algorithm (Algorithm~\ref{alg:unit}) alternates between updating the baseline $\hat g_\theta^{(-)}$ and updating the structural parameter~$\theta$.  At convergence, the pair $(\hat\theta, \hat\beta)$ jointly solves
\begin{equation}\label{eq:stacked-moments}
\sum_{i=1}^n \begin{pmatrix} \hat A_i\!\left(Y_i - \hat g_{\hat\theta}^{(-)}(X_i) - H_i^\top\hat\theta\right) \\[4pt] B_i\!\left(Y_i - \hat g_{\hat\theta}^{(-)}(X_i) - H_i^\top\hat\theta\right) \end{pmatrix}
\;=\; \begin{pmatrix} 0 \\ 0 \end{pmatrix}.
\end{equation}
For the linear baseline $\hat g_\theta^{(-)}(X_i) = b(X_i)^\top\hat\beta$, this is exactly the stacked system $\sum_i C_i(Y_i - \xi_i^\top\hat\gamma) = 0$ with $C_i = (\hat A_i^\top, B_i^\top)^\top$ and $\xi_i = (H_i^\top, b(X_i)^\top)^\top$.

The key complication relative to the classical Zheng--Zhou setting is that $\hat A_i$ depends on the estimated first-stage quantity $\hat\tau_M^{(-)}(X_i)$, which is itself a function of the data.  We decompose the analysis into the oracle component (as if $\tau_M$ were known) and two remainder terms.

\medskip
\textbf{Step~2: Taylor expansion.}

Write $\hat A_i = (T_i - p)\hat W(X_i)$ with $\hat W(X_i) = (1,\;\hat\tau_M^{(-)}(X_i))^\top$, and define the oracle counterpart $A_i^* = (T_i - p)W^*(X_i)$ with $W^*(X_i) = (1,\;\tau_M(X_i))^\top$.  Define
\[
\Delta_i \;=\; \hat A_i - A_i^* \;=\; (T_i - p)\begin{pmatrix}0 \\ \hat\tau_M^{(-)}(X_i) - \tau_M(X_i)\end{pmatrix}.
\]

The oracle estimating equation at the true $\gamma_0 = (\theta_0^\top, \beta_0^\top)^\top$ has score
\[
\psi_i^* \;=\; C_i^*\,(Y_i - \xi_i^\top\gamma_0) \;=\; C_i^*\,\zeta_{Y,i},
\]
where $C_i^* = (A_i^{*\top}, B_i^\top)^\top$.  By construction, $\E[\psi_i^*] = 0$.

The estimated score decomposes as
\begin{align}
C_i\,\zeta_{Y,i}
&= C_i^*\,\zeta_{Y,i} + \underbrace{(\Delta_i^\top, 0)^\top\,\zeta_{Y,i}}_{\text{first-stage perturbation}}. \label{eq:score-decomposition}
\end{align}

The sample moment condition at $\hat\gamma$ is $n^{-1}\sum_i C_i(Y_i - \xi_i^\top\hat\gamma) = 0$.  Adding and subtracting $\gamma_0$:
\begin{align}
0 &= \frac{1}{n}\sum_{i=1}^n C_i\,\zeta_{Y,i} \;-\; \frac{1}{n}\sum_{i=1}^n C_i\,\xi_i^\top\,(\hat\gamma - \gamma_0) \nonumber\\
&= \frac{1}{n}\sum_{i=1}^n C_i^*\,\zeta_{Y,i}
\;+\; \frac{1}{n}\sum_{i=1}^n \begin{pmatrix}\Delta_i \\ 0\end{pmatrix}\zeta_{Y,i}
\;-\; \hat G\,(\hat\gamma - \gamma_0), \label{eq:full-taylor}
\end{align}
where $\hat G = n^{-1}\sum_i C_i\,\xi_i^\top \xrightarrow{p} G = \E[C_i^*\,\xi_i^\top]$ (the perturbation from $\Delta_i$ in $\hat G$ is $o_p(1)$ under Assumption~\ref{ass:regularity}(iv)).  Solving:
\begin{equation}\label{eq:gamma-expansion}
\sqrt{n}(\hat\gamma - \gamma_0)
\;=\;
G^{-1}\,\frac{1}{\sqrt{n}}\sum_{i=1}^n C_i^*\,\zeta_{Y,i}
\;+\; \underbrace{G^{-1}\,\frac{1}{\sqrt{n}}\sum_{i=1}^n \binom{\Delta_i}{0}\zeta_{Y,i}}_{R_\tau}
\;+\; \underbrace{R_g}_{(\text{from }\hat g)}
\;+\; o_p(1),
\end{equation}
where $R_g$ arises from the difference $\hat g_\theta^{(-)} - g_{0,\theta}$.

\medskip
\textbf{Step~3a: Bounding $R_\tau$ (first-stage remainder).}

The $\theta$-block of $R_\tau$ is
\[
R_\tau^{(\theta)} \;=\; G_\theta^{-1}\,\frac{1}{\sqrt{n}}\sum_{i=1}^n \Delta_i\,\zeta_{Y,i}
\;=\; G_\theta^{-1}\,\frac{1}{\sqrt{n}}\sum_{i=1}^n (T_i - p)\begin{pmatrix}0 \\ \hat\tau_M^{(-)}(X_i) - \tau_M(X_i)\end{pmatrix}\zeta_{Y,i}.
\]
Let
\[
\eta_i \;=\; (T_i - p)\,\big(\hat\tau_M^{(-)}(X_i) - \tau_M(X_i)\big)\,\zeta_{Y,i}
\]
denote the scalar summand entering the second component of $R_\tau^{(\theta)}$.  We show that $n^{-1/2}\sum_i \eta_i = o_p(1)$; this implies $R_\tau^{(\theta)} = o_p(1)$ since $G_\theta^{-1}$ is bounded by Assumption~\ref{ass:nondegen}.

\emph{(a) Conditional mean-zero of $\eta_i$.}
Under randomisation (Assumption~\ref{ass:ignorability}), $T_i \indep U_i \mid X_i$.  Combined with the structural model $Y_i = g_0(X_i) + H_i^\top\theta_0 + \varepsilon(T_i,M_i;U_i,X_i)$, this gives $\E[\zeta_{Y,i}\mid T_i,X_i] = \E[\varepsilon(T_i,M_i;U_i,X_i)\mid T_i,X_i] = \E[\varepsilon\mid X_i]$, which does not depend on~$T_i$.  Hence
\begin{equation}\label{eq:tperp-zeta}
\E[(T_i - p)\,\zeta_{Y,i}\mid X_i] \;=\; \E[\zeta_{Y,i}\mid X_i]\;\E[(T_i - p)\mid X_i] \;=\; 0.
\end{equation}
Because $\hat\tau_M^{(-)}(X_i)$ is measurable with respect to the training-fold data $I_k^c$, and the held-out observation $(T_i, M_i, Y_i, X_i)$ for $i\in I_k$ is independent of $I_k^c$, Eq.~\eqref{eq:tperp-zeta} yields
\begin{equation}\label{eq:eta-cond-zero}
\E\!\big[\eta_i \,\big|\, X_i,\;\hat\tau_M^{(-)},\;I_k^c\big] \;=\; \big(\hat\tau_M^{(-)}(X_i) - \tau_M(X_i)\big)\,\E[(T_i - p)\,\zeta_{Y,i}\mid X_i] \;=\; 0.
\end{equation}

\emph{(b) Variance bound.}
Conditional on the training folds $\{I_k^c\}_k$, the $\eta_i$ are mutually independent across $i$ with conditional mean zero, so
\begin{align}
\Var\!\Bigl(\tfrac{1}{\sqrt{n}}\sum_{i=1}^n \eta_i \,\Bigm|\, \{I_k^c\}_k\Bigr)
&\;=\; \tfrac{1}{n}\sum_{i=1}^n \E\!\big[\eta_i^2 \,\big|\, \{I_k^c\}_k\big] \nonumber\\
&\;=\; \E\!\big[(\hat\tau_M^{(-)}(X)-\tau_M(X))^2\,(T-p)^2\,\zeta_{Y}^2 \,\big|\, \{I_k^c\}_k\big] \nonumber\\
&\;\lesssim\; \|\zeta_Y\|_{\infty}^{2}\;\Var(T)\;\big\|\hat\tau_M^{(-)} - \tau_M\big\|_{L_2(P)}^{2}, \label{eq:Rtau-var-bound}
\end{align}
where the last inequality uses the moment condition in Assumption~\ref{ass:regularity}(iii).

\emph{(c) Conclusion.}
By the conditional Chebyshev inequality and Eq.~\eqref{eq:Rtau-var-bound},
\begin{equation}\label{eq:Rtau-order}
\Bigl|\tfrac{1}{\sqrt{n}}\sum_{i=1}^n \eta_i\Bigr| \;=\; O_p\!\big(\|\hat\tau_M^{(-)} - \tau_M\|_{L_2(P)}\big),
\end{equation}
so under Assumption~\ref{ass:regularity}(iv) we obtain $R_\tau^{(\theta)} = o_p(1)$.

\medskip
\textbf{Step~3b: Bounding $R_g$ (baseline remainder).}

The baseline remainder arises from $\hat g_\theta^{(-)}(X_i) \neq g_{0,\theta}(X_i)$.  The score contribution from unit~$i$ includes $\hat A_i\,(\hat g_\theta^{(-)}(X_i) - g_{0,\theta}(X_i))$.  Focusing on the $\theta$-block:
\[
R_g^{(\theta)} \;=\; G_\theta^{-1}\,\frac{1}{\sqrt{n}}\sum_{i=1}^n \hat A_i\,\big(\hat g_\theta^{(-)}(X_i) - g_{0,\theta}(X_i)\big).
\]

For each $i \in I_k$, the estimate $\hat g_\theta^{(-k)}$ is trained on $I_k^c$.  Therefore, conditional on $I_k^c$, the held-out observations $(X_i, T_i, M_i, Y_i)$ for $i\in I_k$ are independent of $\hat g_\theta^{(-k)}$.  The conditional expectation is
\[
\E\!\big[\hat A_i\,(\hat g_\theta^{(-k)}(X_i) - g_{0,\theta}(X_i))\;\big|\;I_k^c\big]
= \E\!\big[(T_i - p)\,\hat W(X_i)\,(\hat g_\theta^{(-k)}(X_i) - g_{0,\theta}(X_i))\;\big|\;I_k^c\big].
\]

Since $\hat W(X_i) = (1,\;\hat\tau_M^{(-k)}(X_i))^\top$ is also measurable with respect to $I_k^c$, we have, conditional on $I_k^c$:
\[
\E\!\big[(T_i - p)\,\hat W(X_i)\,(\hat g_\theta^{(-k)}(X_i) - g_{0,\theta}(X_i))\;\big|\;X_i,\;I_k^c\big]
= \hat W(X_i)\,(\hat g_\theta^{(-k)}(X_i) - g_{0,\theta}(X_i))\;\E[(T_i-p)\mid X_i] = 0.
\]
The final equality uses the centering $\E[(T_i-p)\mid X_i] = 0$ from randomisation.  This is the population-level orthogonality \citep[the working-model property of][]{ZhengZhou2015}), but here it holds \emph{conditionally} on $I_k^c$ due to the independence provided by cross-fitting.

Therefore, $n^{-1/2}\sum_i \hat A_i(\hat g_\theta^{(-)}(X_i) - g_{0,\theta}(X_i))$ is a sum of conditionally mean-zero terms.  By the conditional Chebyshev inequality:
\begin{equation}\label{eq:Rg-bound}
\|R_g\| \;=\; O_p\!\left(\|\hat g_\theta^{(-)} - g_{0,\theta}\|_{L_2}\right) \;=\; o_p(1)
\end{equation}
under Assumption~\ref{ass:regularity}(v).





\medskip
\textbf{Step~4: CLT}

With both remainders shown to be $o_p(1)$, Eq.~\eqref{eq:gamma-expansion} simplifies to
\begin{equation}\label{eq:leading-term}
\sqrt{n}(\hat\gamma - \gamma_0) \;=\; G^{-1}\,\frac{1}{\sqrt{n}}\sum_{i=1}^n C_i^*\,\zeta_{Y,i} \;+\; o_p(1).
\end{equation}

The leading term is a standardised sum of iid random vectors with mean zero (since $\E[C_i^*\,\zeta_{Y,i}] = 0$) and finite covariance (Assumption~\ref{ass:regularity}(iii)).  By the multivariate central limit theorem:
\[
\frac{1}{\sqrt{n}}\sum_{i=1}^n C_i^*\,\zeta_{Y,i} \;\xrightarrow{d}\; \mathcal{N}\!\big(0,\;\E[C_i^*\,C_i^{*\top}\,\zeta_{Y,i}^2]\big).
\]

By Slutsky's theorem:
\[
\sqrt{n}(\hat\gamma - \gamma_0) \;\xrightarrow{d}\; \mathcal{N}\!\left(0,\; G^{-1}\,\E[C_i^*\,C_i^{*\top}\,\zeta_{Y,i}^2]\,G^{-\top}\right).
\]

Restricting to the $\theta$-block and using the block structure of~$G$ (recalling that the $(\theta,\beta)$ cross-block vanishes by the centering property from Eq.~\eqref{eq:centering-key}):
\[
\sqrt{n}(\hat\theta - \theta_0) \;\xrightarrow{d}\; \mathcal{N}\!\left(0,\; G_\theta^{-1}\,S\,G_\theta^{-\top}\right),
\]
where $G_\theta = \E[\hat A_i\,H_i^\top] = \E[A_i^*\,H_i^\top]$ (the oracle bread, since the perturbation from $\Delta_i$ in $G_\theta$ is $o_p(1)$) and $S = \E[A_i^*\,A_i^{*\top}\,\zeta_{Y,i}^2] = \Var[A_i^*\,\zeta_{Y,i}]$.
\end{proof}

\subsection{Sandwich variance estimation}

A consistent estimator of the asymptotic variance is obtained by replacing population quantities with sample analogues evaluated at the estimated parameters:
\begin{equation}\label{eq:sandwich-consistent}
\widehat{\Var}(\hat\theta) \;=\; \frac{1}{n}\,\hat G_\theta^{-1}\,\hat S\,\hat G_\theta^{-\top},
\end{equation}
where
\[
\hat G_\theta \;=\; \frac{1}{n}\sum_{i=1}^n \hat A_i\,H_i^\top, \qquad
\hat S \;=\; \frac{1}{n}\sum_{i=1}^n \hat\psi_i^2\,\hat A_i\,\hat A_i^\top,
\]
with out-of-fold residuals $\hat\psi_i = Y_i - \hat g_{\hat\theta}^{(-)}(X_i) - H_i^\top\hat\theta$.

\begin{remark}[Consistency of the sandwich estimator]
By the uniform law of large numbers and consistency of $(\hat\theta, \hat\beta)$:
\begin{enumerate}
\item[(a)] $\hat G_\theta \xrightarrow{p} G_\theta$, because $\hat A_i \to A_i^*$ in $L_2$ (Assumption~\ref{ass:regularity}(iv)) and $H_i$ is bounded.
\item[(b)] $\hat\psi_i \xrightarrow{p} \zeta_{Y,i}$ for each~$i$, by consistency of $\hat\theta$ and $\hat g$.
\item[(c)] Hence $\hat S \xrightarrow{p} S$, and $\widehat{\Var}(\hat\theta) \xrightarrow{p} n^{-1}\,G_\theta^{-1}\,S\,G_\theta^{-\top}$.
\end{enumerate}
The sandwich estimator is therefore consistent.  Moreover, it treats $\hat\tau_M^{(-)}$ and $\hat g_\theta^{(-)}$ as fixed, which is asymptotically justified because their estimation errors are $o_p(n^{-1/2})$ (for $\hat g$) and $o_p(n^{-1/4})$ (for $\hat\tau_M$), both absorbed by the remainder analysis.
\end{remark}

\subsection{Interpretation of the structural parameters}
\label{subsec:estimand}

Under Assumption~\ref{ass:model-spec}, the structural mean model in Eq.~\eqref{eq:smm} defines two parameters with standard causal interpretations.


\paragraph{Controlled effects.}
The parameter~$\theta_1$ is the \emph{controlled direct effect} (CDE): the average change in the potential outcome $Y(t,m)$ when the treatment is switched from $t=0$ to $t=1$ while the mediator is held fixed at level~$m$.  The parameter~$\theta_2$ is the \emph{controlled mediator effect} (CME): the change in $Y(t,m)$ per unit change in~$m$, holding~$t$ fixed.  These are the primary estimands of the structural mean (mediation) model. They are identified under the NEH assumption (Assumption~\ref{ass:neh}) and are directly relevant to intervention design. Specifically, the value of the parameter, $\theta_2$, tells a researcher how much the outcome would change if the mediator could be shifted by one unit. This quantity is needed to evaluate the value of targeting $M$ in future interventions or, if the outcome is delayed in the future, it could potentially be utilised as a surrogate for treatment success \cite{FlemingPowers2012}.

\paragraph{From controlled to natural effects.}
In some cases  $\theta_2\,\tau_M(X)$ can be read as a natural
indirect effect (NIE), defined via cross-world counterfactuals as
\[
\mathrm{NIE}(X)\;=\;\E\!\big[Y(1,M(1))-Y(1,M(0))\mid X\big].
\]
Write the structural residual as
$\varepsilon(t,m)\equiv Y(t,m)-g(X)-\theta_1 t-\theta_2 m$, so that
$Y(1,m)=g(X)+\theta_1+\theta_2 m+\varepsilon(1,m)$. Evaluating at the two
counterfactual mediator values $M(1)$ and $M(0)$ and taking the difference, the baseline
$g(X)+\theta_1$ cancels leaving
\[
Y(1,M(1))-Y(1,M(0))
=\theta_2\big(M(1)-M(0)\big)+\varepsilon(1,M(1))-\varepsilon(1,M(0)).
\]
Taking the conditional expectation given $X$ yields
\begin{equation}\label{eq:nie-decomp}
\E[\mathrm{NIE}(X)\mid X]
=\theta_2\,\tau_M(X)
+\E\!\big[\varepsilon(1,M(1))-\varepsilon(1,M(0))\mid X\big],
\end{equation}
 Thus
$\theta_2\,\tau_M(X)$ equals the NIE up to the residual term in
Eq.~\eqref{eq:nie-decomp}, which measures whether the $M\!\to\!Y$ contrast is modified,
in the mean, by the counterfactual mediator values themselves. We rule this out with
a cross-world condition on $\varepsilon$.
\begin{assumption}[Cross-world residual mean-independence]\label{ass:resid-indep}
For every fixed regime $(t,m)$,
\[
\E\!\big[\varepsilon(t,m)\,\big|\,M(1),M(0),X\big]\;=\;\E\!\big[\varepsilon(t,m)\mid X\big]\;=\;0 .
\]
\end{assumption}
Assumption~\ref{ass:resid-indep} is the mean-level, cross-world form of the ``no unmeasured
confounder that also serves as an effect modifier'' condition of
\citet{ZhengZhou2015}: knowledge of the counterfactual mediators $(M(1),M(0))$
carries no information about the conditional mean of the structural residual at any
fixed regime. Under it the residual term in Eq.~\eqref{eq:nie-decomp} vanishes, since by
iterated expectation
\[
\E[\varepsilon(1,M(1))\mid X]
=\E\big[\,\E[\varepsilon(1,m)\mid M(1),M(0),X]\big|_{m=M(1)}\,\big|\,X\big]=0,
\]
and analogously for $M(0)$. The decomposition then simplifies to:
\begin{equation}\label{eq:nie-bridge}
\E[\mathrm{NIE}(X)\mid X]\;=\;\theta_2\,\E[M(1)-M(0)\mid X]\;=\;\theta_2\,\tau_M(X),
\end{equation}

Assumption~\ref{ass:resid-indep} is not needed to identify $\theta_1$ and $\theta_2$; it is just an additional interpretive condition that bridges the structural parameters to the NIE. We refer to \citet{TenHave2010} and \citet{VANDERWEELENIE} for a comprehensive review of the assumptions required to bridge controlled and natural effects under different model specifications.

\section{Estimator Hyperparameters}
\label{app:hyper}

All Stage~1 estimators are fitted via $K=5$-fold cross-fitting (stratified random split, \texttt{KFold} with \texttt{shuffle=True} and \texttt{random\_state} set to the replication seed).
Stage~2 G-estimation is applied once to the full sample using the out-of-fold $\hat\tau_M(X_i)$ vector produced by Stage~1.

\subsection{Ridge T-Learner (Stage~1)}

A separate \texttt{RidgeCV} model is fitted on the treated ($T=1$) and control ($T=0$) sub-samples.  The regularisation parameter $\alpha$ is chosen independently per arm by $5$-fold cross-validation over the grid $\{0.01,\,0.1,\,1.0,\,10.0\}$.  An intercept is included.
$\hat\tau_M(X) = \hat\mu_1(X) - \hat\mu_0(X)$.
Seed is fixed at $0$ across all replications (the cross-fitting fold split uses the replication seed).

  \subsection{Random Forest T-Learner with Tuning (Stage~1)}

A separate \texttt{RandomForestRegressor} is fitted per arm via \texttt{RandomizedSearchCV} ($n_{\text{iter}}=6$, $3$-fold CV, scoring: negative MSE).  The search grid is:

  \begin{center}
  \begin{tabular}{ll}
  \toprule
  Hyperparameter & Candidates \\
  \midrule
  \texttt{max\_depth}         & $\{10,\;20,\;\text{None}\}$ \\
  \texttt{max\_features}      & $\{0.20,\;0.33,\;0.50\}$ \\
  \texttt{min\_samples\_leaf} & $\{1,\;3,\;5\}$ \\
  \texttt{n\_estimators}      & $\{200,\;300\}$ \\
  \bottomrule
  \end{tabular}
  \end{center}

Seed is fixed at $0$ for the search random state across all replications.
$\hat\tau_M(X) = \hat\mu_1(X) - \hat\mu_0(X)$ using the best-found hyperparameters per arm.

\subsection{ TARNet (Stage~1)}

TARNet \citep{shalit2017} uses a shared representation network $\Phi(X)$ followed by two separate outcome heads $h_0, h_1$, yielding $\hat\tau_M(X) =h_1(\Phi(X)) - h_0(\Phi(X))$.
The architecture and training schedule are scaled to sample size as shown in Table~\ref{tab:tarnet-params}; all other settings are fixed across all $n$.
Seed is set to $(\text{replication seed} + \text{fold index})$ so that each cross-fitting fold receives a distinct initialisation, avoiding correlated failure modes across replications.

  \begin{table}[!ht]
  \centering
  \caption{TARNet hyperparameters by sample size (pooled scenarios).
  $L_r$: representation layers; $d_r$: representation width;
  $L_o$: outcome-head layers; $d_o$: outcome-head width;
  $\lambda$: L2 penalty; $B$: batch size; $P$: early-stopping patience (epochs).
  Validation split is $10\%$ of the training fold in all cases;
  early stopping is not triggered before $20$ full epochs.
  Optimiser: Adam with learning rate $10^{-3}$; activation: ELU.}
  \label{tab:tarnet-params}
  \smallskip
  \begin{tabular}{ccccccccc}
  \toprule
  $n$ & $L_r$ & $d_r$ & $L_o$ & $d_o$ & $\lambda$ & $B$ & $P$ & max iter \\
  \midrule
  ${\le}500$     & 2 & 100 & 1 &  50 & 0.020 &  32 & 20 &  500 \\
  ${\le}2\,000$  & 3 & 200 & 2 & 100 & 0.020 &  64 & 25 &  700 \\
  ${\le}5\,000$  & 3 & 200 & 2 & 100 & 0.010 & 128 & 30 &  800 \\
  ${\le}10\,000$ & 3 & 200 & 2 & 100 & 0.010 & 256 & 40 & 1000 \\
  \bottomrule
  \end{tabular}
  \end{table}

\subsection{G-Estimator (Stage~2)}

The Stage~2 G-estimator follows \citet{ZhengZhou2015} with centred weights $A_i = (T_i - e(X_i))\,(1,\,\hat\tau_M(X_i))^\top$.
The nuisance baseline $g(X)$ is estimated by \texttt{RidgeCV} (regularisation $\alpha\in\{10^{-3},10^{-2},0.1,1,10,100\}$) fitted on $5$ cross-fit folds (same fold partition as Stage~1), so that
regularisation bias in $g$ does not propagate into $\hat\theta$.
The alternating optimization loop runs for at most $100$
iterations with convergence tolerance $\|\theta^{(t+1)}-\theta^{(t)}\|<10^{-4}$.
Standard errors use the analytical sandwich estimator. An experiment is flagged as weakly identified and excluded from SE aggregation when the bread matrix condition number exceeds $10^{6}$ or the Stage~1 NRMSE exceeds $1.3$.

\section{Acknowledgements}

The authors would like to express their gratitude to Prof.Dr. Stijn Vansteelandt  and Dr. Ruicong Yao for their insightful discussions and valuable comments on this work.

\subsection{Use of LLMs}
    
The authors acknowledge: ``Any use of generative AI in this manuscript adheres to ethical guidelines for use and acknowledgment of generative AI in academic research, as outlined in the manuscript \cite{PorsdamMann2024}. Each author has made a substantial contribution to the work, which has been thoroughly vetted for accuracy, and assumes responsibility for the integrity of their contributions. The authors utilized Claude Opus 4.5 for improving the readability and clarity of the text, and Claude Code (Opus 4.5) for supporting the creation, validation and testing of the code.''

\end{document}